\documentclass[preprint,12pt]{elsarticle}
\usepackage{graphicx}
\usepackage{booktabs}
\usepackage{amsmath}
\usepackage{amssymb}
\usepackage{nicematrix,tikz}
\usepackage{algorithm}
\usepackage{algpseudocode}
\usepackage{changepage}
\usepackage{hyperref}
\usepackage{amsthm}
\usepackage{threeparttable}
\usepackage{adjustbox}
\usepackage{array}
\usepackage{breqn}
\usepackage{float}
\usepackage{caption}
\usepackage{titlesec}
\newtheorem{theorem}{Theorem}
\newtheorem*{theorem*}{Theorem}
\newtheorem{lemma}{Lemma}
\newtheorem{proposition}{Proposition}

\usepackage{subcaption}
\usepackage{bm}

\usepackage{multirow}

\usetikzlibrary{quantikz2}
\journal{Engineering Applications of Artificial Intelligence}

\begin{document}

\begin{frontmatter}




\title{Mitigating Measurement-Induced Training Instability in Hybrid Quantum Neural Networks for Protein Classification}
\author[1]{Milton Mondal}
\author[1]{Sushovan Chanda}
\author[1]{Mohamad Mahdi Alawieh}
\author[1]{Brijesh Sukhadiya}
\author[1]{Donatus Krah}
\author[1]{Clinton Gonsalves}
\author[1]{Antonios Ntolkeras}
\author[1]{Silvio O. Rizzoli\corref{cor}}
\ead{silvio.rizzoli@med.uni-goettingen.de}
\author[1]{Ali H. Shaib\corref{cor}}
\ead{ali.shaib@med.uni-goettingen.de}
\cortext[cor]{Corresponding Authors}

\address[1]{Department of Neuro- and Sensory Physiology, University Medical Center Göttingen, Göttingen, Germany
} 

\begin{abstract}

Hybrid Quantum Neural Network (QNN) classifiers produce logits as expectation values of quantum measurement operators. For standard Pauli measurements, these outputs are intrinsically bounded to the interval $[-1,1]$. When such bounded logits are used directly with the cross-entropy loss applied
to softmax-normalized logits for multi-class classification, the loss function operates in a regime of weak sensitivity to logit differences. As a consequence, parameter gradients are suppressed,
leading to unstable optimization in variational quantum classifiers (VQCs). In this work, we identify this effect as \emph{measurement-induced logit contraction}, a previously uncharacterized source of trainability degradation in hybrid QNNs. To address this limitation, we introduce a learnable scaling parameter, termed \emph{Quantum Measurement Temperature (QMT)}, which rescales quantum measurement outputs prior to the loss.
Unlike post-hoc calibration, \emph{QMT acts during training and compensates for the physically imposed bounds on quantum measurement outputs}. This rescaling increases gradient magnitude and variance, thereby improving loss sensitivity. The proposed mechanism is architecture-agnostic and does not modify the quantum ansatz, circuit depth, or measurement operators. Experiments on fluorescence microscopy images and a six-class variant of Fashion MNIST demonstrate that QMT consistently enhances logit separation, strengthens gradients, stabilizes training across random initializations, and improves classification accuracy, relative to unscaled measurement readouts. These results demonstrate that QMT enables stable and reliable training of hybrid QNNs for practical applications.

\end{abstract}





\begin{keyword}
Quantum Machine Learning\sep Quantum Neural Network Trainability\sep Fluorescence Microscopy\sep Variational Quantum Circuits


\end{keyword}

\end{frontmatter}



\section{Introduction}
Hybrid quantum neural networks combine variational quantum circuits with classical optimization and have been explored for classification tasks mainly as feasibility studies rather than as robust learning systems~\cite{biamonte2017quantum,benedetti2019parameterized,abbas2021power}. Despite progress in circuit design, data encoding, and optimization strategies, training instability remains a major obstacle in utilizing quantum neural networks (QNNs) for practical applications. In particular, beyond circuit-level effects, hybrid QNNs suffer from a previously overlooked bottleneck at the measurement-loss interface, where bounded quantum measurement outputs are directly used as logits in softmax-based cross entropy losses. 

Prior work has shown that random parameter initialization, global cost functions, high expressivity, and entanglement growth can lead to vanishing gradients through concentration of measure, giving rise to barren plateaus~\cite{mcclean2018barren} and other challenges in optimizing quantum models~\cite{cerezo2021cost}. These studies characterize how the model architecture can influence parameter gradients~\cite{donaire2026hybrid}. However, this line of work implicitly assumes that the gradients computed for the model parameters are propagated properly through the classical loss function without introducing distortion~\cite{cerezo2021cost, hubregtsen2021evaluation}. Although this assumption is reasonable for fully classical models, it does not hold for hybrid quantum classifiers. 

In quantum models, predictions are derived from the expectation values of quantum measurement operators, which are intrinsically bounded to the interval $[-1,1]$ for standard Pauli measurements. Despite this fundamental constraint, hybrid QNNs typically adopt the same softmax-based probability assignment and cross-entropy loss function used in classical deep learning, where logits are unbounded and can dynamically rescale during training~\cite{goodfellow2016deep,guo2017calibration}. This creates a structural mismatch between bounded quantum measurement outputs and a loss function whose sensitivity depends critically on logit magnitude. 

When quantum measurement outputs are confined to a narrow numerical range, gradients with respect to quantum parameters are suppressed. The resulting failure mode is distinct from previously studied circuit level gradient collapse mechanisms, which have been discussed in the past ~\cite{mcclean2018barren, holmes2022connecting,  ortiz2021entanglement} and arises from post measurement processing. To address this limitation, we introduce Quantum Measurement Temperature (QMT), a learnable scalar applied to quantum measurement outputs. By rescaling the expectation values before softmax computation, QMT restores an effective logit dynamic range and shifts the loss to regions with higher gradient sensitivity. 

The main contributions of this work are:
\begin{itemize}
\item Identification of a measurement-induced logit compression mechanism in hybrid quantum classifiers, showing how bounded quantum measurements suppress class separability and gradient flow in multi-class settings.

\item	Introduction of Quantum Measurement Temperature as an architecture-agnostic and theoretically grounded method for restoring effective logit range and stabilizing training, without modifying the quantum circuit.

\item	Analytical characterization of how temperature scaling affects achievable confidence, loss sensitivity, gradient magnitude, and gradient variance.

\item	Experimental validation microscopy image classification and standard vision benchmarks, demonstrating improved training stability and classification performance across diverse architecture configurations.
\end{itemize}


\label{sec1}

\section{Related Work and Motivation}

The trainability of variational quantum algorithms (VQAs) and quantum neural networks (QNNs) is determined primarily by the strengths of gradients during optimization. Several previous works have established that gradient magnitudes in parameterized quantum circuits can vanish
exponentially under various conditions~\cite{heyraud2023efficient, grant2019initialization}. These include the discovery of barren plateau (BP) problem by McClean \textit{et al.}~\cite{mcclean2018barren}, which showed that
for highly expressive or random ans\"atze, the variance of gradients decays
exponentially with the number of qubits. Subsequent studies identified
different sources of vanishing gradients while training a QNN, including dependence on global versus local cost
functions~\cite{cerezo2021cost}, excessive expressivity~\cite{sim2019expressibility} and entanglement
growth~\cite{holmes2022connecting,ortiz2021entanglement}, noise driven
gradient collapse~\cite{wang2021noise}, and the
algebraic structure of the circuit generators~\cite{ragone2024lie}.

Although previous papers characterize gradient concentration from ans\"atze point of view, recent research has shown that data structure~\cite{thanasilp2023subtleties}, encoding schemes~\cite{schuld2021effect,berberich2024training},
and task-dependent observables also influence QNN trainability~\cite{kashif2024nrqnn}. For example, correlated parameterizations can increase gradient magnitudes
relative to uncorrelated initializations~\cite{volkoff2021large}. 
Dissipative QNNs exhibit clear regimes where gradients either remain 
informative or collapse~\cite{sharma2022trainability}.
Dataset structure and encoding choices can also influence gradient
statistics, leading to ``dataset induced'' barren plateaus
even for shallow circuits~\cite{thanasilp2023subtleties}. Whereas modifying architectures such as QCNNs~\cite{cong2019quantum, mahendra2026quantum},
tensor network inspired designs~\cite{huggins2019towards,rieser2023tensor},
problem dependent ans\"atze~\cite{anschuetz2022quantum},
and quantum kernel based methods~\cite{schuld2021supervised,huang2021power}
aim to maintain non vanishing gradients by controlling locality,
expressivity, or parameterization geometry~\cite{lozano2025practical}. 

Despite these advances, existing theory predominantly focuses on gradients
generated \emph{within} the quantum circuit~\cite{ anschuetz2022quantum, caro2022generalization, pesah2021absence}. However, when QNNs are used for
classification, the model output is not a scalar observable but a
vector of expectation values that serve as logits for a classical loss
function. For standard Pauli measurements, each logit is confined to a narrow interval, introducing a physically imposed bound that has no analogue
in classical neural networks. Prior trainability analyses implicitly assume
that the loss function faithfully propagates circuit-level gradients, without
accounting for how bounded quantum outputs interact with the sensitivity of
the loss itself~\cite{cerezo2021cost, holmes2022connecting, long2025hybrid}. 
Softmax-based cross-entropy losses are highly sensitive to the relative scale
of logits, and when all logits occupy a narrow numerical range, the loss
operates in a regime of weak gradient response. 

Temperature scaling has previously been explored in classical machine learning, primarily as a post-training calibration technique~\cite{guo2017calibration, mehrtash2020confidence, balanya2024adaptive}. Related temperature-like modifications have also been explored in training ~\cite{das2023powergrad}.
In quantum machine learning, temperature-like parameters
appear only in limited and conceptually distinct contexts, such as contrastive or self-supervised representation learning, where temperature controls similarity sharpness \cite{jaderberg2022quantum}. Separately, temperature arises in quantum Boltzmann and Gibbs-state models as a physical parameter for thermal distributions \cite{amin2018quantum}. None of these approaches addresses the optimization instability arising from the interaction between bounded quantum measurement outputs and softmax-based cross entropy loss sensitivity.

We introduce Quantum Measurement Temperature
(QMT) as a learnable scalar at the quantum measurements and classical loss interface. Unlike prior uses of temperature in quantum machine learning, QMT directly targets the interaction
between bounded measurements and loss sensitivity, restoring gradient strength
without modifying the quantum circuit itself. 

\subsection{Hybrid QNNs on expansion microscopy protein data}

Although hybrid quantum classical neural networks have been applied on standard datasets, their applicability to real fluorescence microscopy imaging remains largely unexplored. In contrast to synthetic or vision benchmark datasets, fluorescence microscopy protein data represent a challenging regime for hybrid quantum models due to strong
intensity fluctuations, photon shot noise, variable point spread
functions, and the intrinsic heterogeneity of protein structures~\cite{truckenbrodt2018x10, truckenbrodt2019practical}. These properties amplify the effects of bounded logits, making protein classification a natural stress test for measurement induced gradient suppression. To test the proposed method, we decided to employ a recent technology, named ONE microscopy~\cite{Shaib2025One}, which delivers highly challenging images of proteins. By combining expansion microscopy~\cite{chen2015expansion, tillberg2016protein}, in which the specimen is physically enlarged, with a fluorescence fluctuation analysis, ONE microscopy increases the resolution of optical microscopes to the nanometer level, thereby enabling the analysis of protein shapes. The images obtained are heterogeneous, indicating proteins in different conformations and positions and therefore serve as an optimal test case for our methodology.

In this work, we evaluate hybrid QNN classifiers directly on fluorescence microscopy
images of protein structures. We show that
measurement-induced logit contraction severely restrict gradient strength on such data, and QMT scaling provides solution to the traianability collapse. The improvement in gradient magnitude,
logit margin, and classification accuracy observed here suggests that hybrid
QNNs, when equipped with appropriate gradient stabilization mechanisms, can provide automated tools for real life applications in fluorescence imaging.
Once this bottleneck is addressed, hybrid QNNs can reliably discriminate between protein classes in fluorescence microscopy data.

\label{sec2}


\section{Method}

\subsection{Hybrid classical-quantum neural network}

We consider a hybrid classical-quantum neural network that combines classical convolutional feature extraction with a variational quantum classifier. The classical
component maps input images to a fixed dimensional feature vector compatible
with the quantum input dimension. These features are encoded as data using Pauli rotation gates in a variational quantum circuit (VQC). The measurement outputs of the VQC acts as logits for multi class classification. The full model is trained using gradient based optimization. Classical
parameters, quantum circuit parameters, and temperature introduced in
this work are optimized jointly.

\subsection{Variational quantum circuit}

The quantum classifier consists of a parameterized quantum circuit acting on
$n$ qubits. Input features are embedded using angle encoding with data
reuploading~\cite{perez2020data}, implemented through single-qubit rotations. Each variational
layer comprises data-dependent rotations, trainable single-qubit rotations, and
entanglement among qubits. Entanglement is applied between neighboring qubits using two-qubit entangling gates in an alternating connectivity scheme. The circuit outputs a vector of
expectation values corresponding to Pauli observables, producing one scalar
output per class.

\subsection{Measurement-induced logit contraction}

Let $z_k$ denote the expectation value of the measurement operator
associated with class $k$. Since these outputs are expectation values of
bounded observables, each logit satisfies the following criterion.
\[
z_k \in [-1, 1].
\]
These bounded logits are passed to a classical loss function for multi-class
classification. The reduction of the effective range of logits due to bounded measurement outputs is termed measurement-induced logit contraction. It arises solely from the quantum measurement process and is independent of circuit depth, entanglement structure, or parameter initialization. As a consequence, the separation between class logits is limited, placing the cross entropy loss in a low-sensitivity regime
with respect to its inputs.

\subsection{Quantum Measurement Temperature (QMT)}

To mitigate the problems associated with measurement-induced logit contraction, we introduce \emph{Quantum Measurement Temperature (QMT)} as a scalar parameter
applied at the measurement interface as shown in Fig.\ref{fig:hybrid}. Given a vector of measurement logits
$z = (z_1, \dots, z_K)$, QMT rescales the logits as
\[
\tilde{z}_k = \frac{z_k}{T},
\]
where $T > 0$ is a learnable parameter.

The rescaled logits $\tilde{z}$ are passed to the softmax function and then used to compute
associated cross entropy loss. All model parameters, including classical weights, quantum circuit parameters,
and the temperature parameter $T$, are optimized jointly using gradient based
methods.

\begin{figure}[H]
    \centering
    \begin{subfigure}[b]{1\textwidth}
        \includegraphics[width=\textwidth]{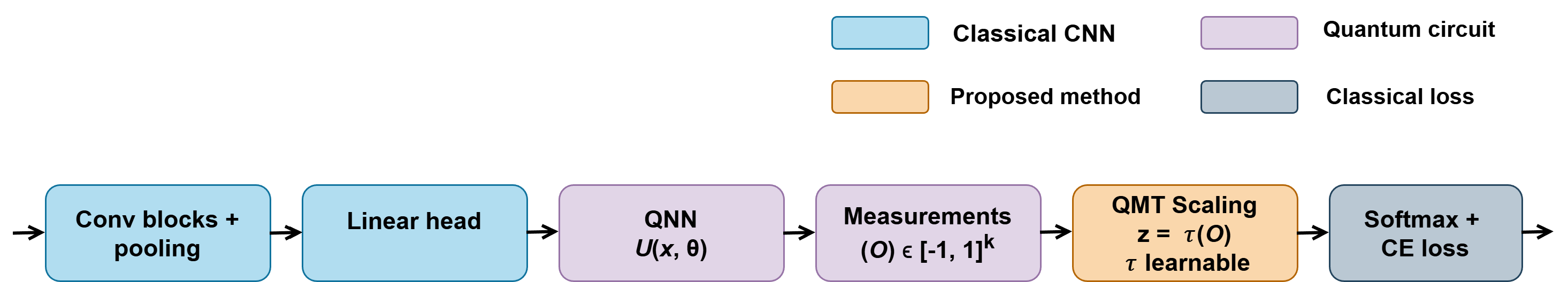}
        \caption{Quantum Measurement Temperature (QMT) to mitigate measurement induced training degradation}
        \label{fig:hybrid}
    \end{subfigure}


    \begin{subfigure}[b]{1\textwidth}
        \includegraphics[width=\textwidth]{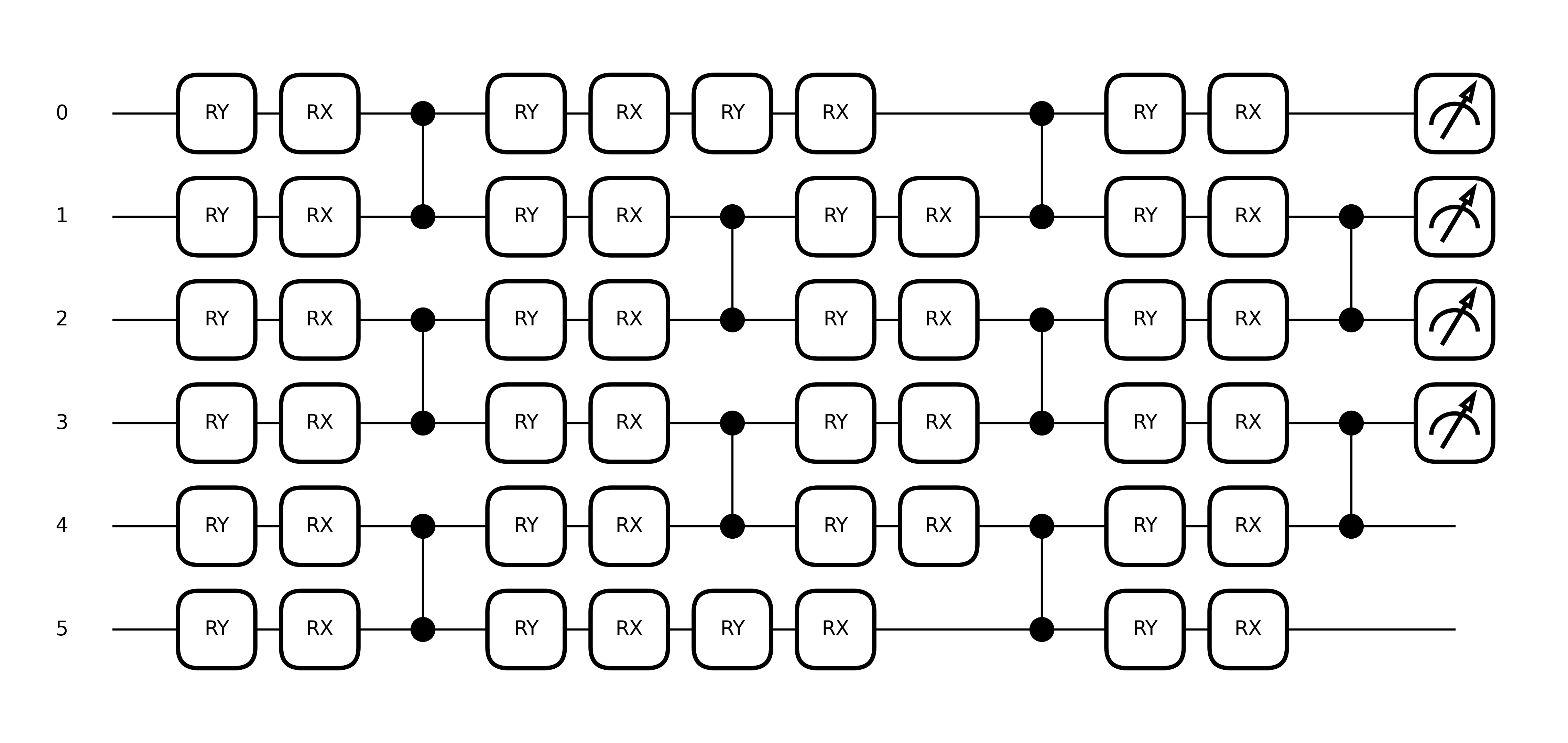}
        \caption{Quantum circuit with rotational gates and entanglements}
        \label{fig:quantum_part}
    \end{subfigure}

    \caption{Hybrid quantum classical neural network with QMT pipeline for improved classification}
    \label{fig:hybrid_vqc}
\end{figure}

\section{Mathematical Analysis:}

Variational Quantum Classifiers (VQC) use the measurement operator's output
$z_i(x,\theta) = \langle \psi(x,\theta)|O_i|\psi(x,\theta)\rangle$
as logits. As standard Pauli observables have
eigenvalues in $[-1,1]$, these logits are intrinsically bounded in this range. In this section, we analyze the consequences of bounded logits on
gradient sensitivity, loss dynamics, trainability, and formally establish
how Quantum Measurement Temperature (QMT) modifies these properties to improve gradient strength and training. The proofs of the following results are provided in the supplementary material.

\subsection{Measurement Induced Logit Compression}

Bounded quantum measurements restrict logits to a narrow numerical range.
This constraint limits both the maximum achievable classification confidence
and the minimum attainable loss, independent of circuit expressivity.
The following result formalizes these limits.

\begin{theorem}[Logit Compression]
\label{thm:compression_main}
Let $z\in\mathbb{R}^K$ satisfy $\|z\|_\infty\le a$. Let
$p_i(z)=\exp(z_i)/\sum_j\exp(z_j)$ denote softmax probabilities.
For a one-hot label $y_c=1$, the maximum probability for the correct class and the minimum loss is given by,
\begin{align}
p_c(z) &\le \frac{e^a}{e^a + (K-1)e^{-a}} \\
\mathcal{L}(z,y) &\ge \log\!\big(1 + (K-1)e^{-2a}\big)
\end{align}
\end{theorem}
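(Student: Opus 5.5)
The plan is to rewrite the correct-class probability in a form that is monotone in each coordinate, and then maximize over the box $\|z\|_\infty\le a$. Dividing numerator and denominator by $e^{z_c}$ gives
\[
p_c(z)=\frac{1}{1+\sum_{j\neq c} e^{z_j-z_c}} .
\]
This expression is strictly increasing in $z_c$ and strictly decreasing in each $z_j$ with $j\neq c$. So over the constraint set $\{z:\ -a\le z_i\le a\}$, the supremum is attained at the corner $z_c=a$, $z_j=-a$ for all $j\neq c$.

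To make this rigorous without calculus, bound each term separately. For every $j\neq c$ we have $z_j-z_c\ge -a-a=-2a$, hence $e^{z_j-z_c}\ge e^{-2a}$. Summing over the $K-1$ indices gives
\[
\sum_{j\neq c}e^{z_j-z_c}\ge (K-1)e^{-2a},
\]
and therefore
\[
p_c(z)\le \frac{1}{1+(K-1)e^{-2a}}=\frac{e^a}{e^a+(K-1)e^{-a}},
\]
where the last form follows by multiplying numerator and denominator by $e^a$. This is the first inequality. Equality holds at the corner point identified above, so the bound is sharp.

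For the loss, use that the cross-entropy with a one-hot label is $\mathcal{L}(z,y)=-\sum_i y_i\log p_i(z)=-\log p_c(z)$. Since $-\log$ is decreasing, the upper bound on $p_c$ becomes a lower bound on the loss:
\[
\mathcal{L}(z,y)\ge \log\big(1+(K-1)e^{-2a}\big).
\]
Again equality holds at the same extremal configuration.

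There is no real obstacle here. The only points needing care are stating explicitly that the loss is $-\log p_c$ (standard softmax cross-entropy on the logits $z$), and noting that the bound is uniform in $z$ and therefore independent of circuit expressivity. In the later application one sets $a=1$ for Pauli observables, or $a=1/T$ after QMT rescaling.
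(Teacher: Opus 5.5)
Your proof is correct and follows the same route as the paper: you identify the extremal configuration $z_c=a$, $z_j=-a$ ($j\neq c$), and then use that $-\log$ is decreasing to turn the probability bound into the loss bound. Your rewriting $p_c(z)=1/\bigl(1+\sum_{j\neq c}e^{z_j-z_c}\bigr)$ with the termwise bound $z_j-z_c\ge -2a$ makes the maximization step more rigorous than the paper's monotonicity remark, which loosely calls the denominator decreasing in $z_j$ and ignores that it also depends on $z_c$; it also gives the loss bound directly, without the paper's algebraic simplification of $-\log p_c^{\max}$.
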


This result demonstrates that no matter how expressive the circuit
is, measurement bounds constrain the model's ability to form high margin decisions.
For example, if we consider $K=6$ classes and with standard Pauli measurements, the bounded logits are with $a=1$. Then, the maximum probability possible for the correct class is 
\[
p_c^{\max}
= \frac{e^{a}}{e^{a} + (K-1)e^{-a}}
= \frac{e^{1}}{e^{1} + 5 e^{-1}}
= \frac{e^{2}}{e^{2} + 5}
=\approx \frac{7.39}{7.39 + 5}
\approx 0.597
\]

Thus, even in the \emph{best possible configuration}, the model cannot be
more than $60\%$ confident on a 6 class classification task if logits are restricted
to the interval $[-1,1]$. Due to this constraint, the model is structurally unable to represent highly separated class predictions. This also hampers the loss optimization, as with $a=1$ and $K=6$,
\[
L_{\min}
\ge \log\!\big(1 + 5 e^{-2}\big)
\approx \log\!\big(1 + 5\cdot 0.135\big)
= \log(1.675)
\approx 0.516.
\]

Thus, the loss cannot fall below approximately $0.52$ with standard measurements for this configuration. Due to this logit space compression, the loss landscape
is confined to a regime in which low losses are unreachable. In fact, the more we increase the number of classes, the minimum loss value will be larger and the maximum probability for the correct class will be lesser.

\subsection{QMT for Logit Range Expansion}

We introduce learnable temperature scaling on the logits to improve the measurement induced compression. Rescaling measurement output as $z/T$, allows the model to reach confidence levels and loss values that were previously unattainable under bounded measurements.

\begin{proposition}[Temperature Decompression]
\label{thm:decompression_main}
Let $\tilde z = z/T$ with $\|z\|_\infty \le a$. Then
\begin{align}
p_c^{\max}(a,T)
&= \frac{e^{a/T}}{e^{a/T} + (K-1)e^{-a/T}}, \\
\mathcal{L}_{\min}(a,T)
&= \log\!\big(1 + (K-1)e^{-2a/T}\big),
\end{align}
and as $T\to 0$, $p_c^{\max}(a,T) \to 1$ and
$\mathcal{L}_{\min}(a,T)\to 0$.
\end{proposition}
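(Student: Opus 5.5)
The plan is to reduce the claim to Theorem~\ref{thm:compression_main} applied to the rescaled logits, and then establish equality rather than just an inequality.

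\textbf{Step 1: reduce to the compression theorem.} Since $T>0$ and $\|z\|_\infty\le a$, we have $\|\tilde z\|_\infty = \|z\|_\infty/T \le a/T$. Applying Theorem~\ref{thm:compression_main} with $a$ replaced by $a' = a/T$ gives
\[
p_c(\tilde z)\le \frac{e^{a/T}}{e^{a/T}+(K-1)e^{-a/T}}, \qquad \mathcal{L}(\tilde z,y)\ge \log\!\big(1+(K-1)e^{-2a/T}\big).
\]
For completeness, the underlying inequality is elementary. Write $p_c = 1/\big(1+\sum_{j\ne c} e^{\tilde z_j-\tilde z_c}\big)$. Each difference satisfies $\tilde z_j-\tilde z_c\ge -2a/T$, so $p_c\le 1/\big(1+(K-1)e^{-2a/T}\big)$. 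Multiplying numerator and denominator by $e^{a/T}$ gives the stated form.

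\textbf{Step 2: show the bounds are attained.} The proposition asserts equalities, i.e.\ that these are the maxima and minima over the admissible set. Choose $z_c=a$ and $z_j=-a$ for $j\ne c$; this satisfies $\|z\|_\infty\le a$. Substituting yields exactly $p_c=e^{a/T}/\big(e^{a/T}+(K-1)e^{-a/T}\big)$. Because $\mathcal{L}=-\log p_c$ and $-\log$ is decreasing, the same point achieves $\mathcal{L}_{\min}=\log\!\big(1+(K-1)e^{-2a/T}\big)$, so the supremum and infimum are attained.

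\textbf{Step 3: the limit $T\to 0^+$.} Here $a/T\to\infty$ (assuming $a>0$), so $e^{-2a/T}\to 0$. Hence $p_c^{\max}=1/\big(1+(K-1)e^{-2a/T}\big)\to 1$, and by continuity of $\log$ at $1$, $\mathcal{L}_{\min}\to 0$.

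The only real subtlety is the attainability in Step 2, which requires the extremal configuration to be realizable. Within the stated hypothesis ($z$ ranging over the $\ell_\infty$ ball) it trivially is. The degenerate case $a=0$ must be excluded for the limit claim, since then the bounds are $1/K$ and $\log K$ for every $T$.
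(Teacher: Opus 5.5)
Your proposal is correct and follows essentially the same route as the paper: observe that $\|\tilde z\|_\infty \le a/T$, invoke Theorem~\ref{thm:compression_main} with $a$ replaced by $a/T$, and let $a/T\to\infty$ to obtain the limits. You also check explicitly that $z_c=a$, $z_j=-a$ attains the bound, which the paper inherits from its proof of Theorem~\ref{thm:compression_main}, and you note that $a>0$ is needed for the limit claim; the paper leaves both of these implicit.
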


QMT therefore acts as a geometric decompression operator, expanding logit variation without modifying the quantum circuit itself.

\subsection{Gradient Sensitivity with Respect to Logits}

The next Lemma shows that if the learned temperature T becomes a fractional value, it strengthens logit level gradients by $1/T$ factor.

\begin{lemma}[Logit Gradient Scaling]
\label{thm:loss_main}
For QMT  logits $\tilde z=z/T$,
\[
\frac{\partial\mathcal{L}^{(T)}}{\partial z_i}
= \frac{1}{T}\big(p_i^{(T)} - y_i\big),
\]
where $p^{(T)}=\mathrm{softmax}(z/T)$.
\end{lemma}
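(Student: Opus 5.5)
The plan is a two-step chain rule: first differentiate the cross-entropy with respect to the rescaled logits $\tilde z$, then pull back through the linear map $z\mapsto z/T$. Throughout, $\mathcal{L}^{(T)}(z,y)=-\sum_k y_k\log p_k^{(T)}$ with $p^{(T)}=\mathrm{softmax}(\tilde z)$ and $\tilde z=z/T$. The temperature $T>0$ is held fixed, since the lemma concerns the partial derivative with respect to $z_i$ only.

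\textbf{Step 1 (gradient with respect to $\tilde z$).} Write the loss in log-sum-exp form:
\[
\mathcal{L}^{(T)}=-\sum_k y_k\tilde z_k+\Big(\sum_k y_k\Big)\log\sum_j e^{\tilde z_j}.
\]
Since $y$ is one-hot, or more generally a probability vector, we have $\sum_k y_k=1$. Differentiating gives
\[
\frac{\partial\mathcal{L}^{(T)}}{\partial\tilde z_j}=-y_j+\frac{e^{\tilde z_j}}{\sum_l e^{\tilde z_l}}=p_j^{(T)}-y_j.
\]
This uses only the derivative of log-sum-exp, which is the softmax. It avoids differentiating $\log p_k$ term by term and the resulting Jacobian $\partial p_k/\partial\tilde z_j=p_k(\delta_{kj}-p_j)$.

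\textbf{Step 2 (pull back to $z$).} Because $\tilde z_j=z_j/T$, the Jacobian is diagonal: $\partial\tilde z_j/\partial z_i=\delta_{ij}/T$. The chain rule then gives
\[
\frac{\partial\mathcal{L}^{(T)}}{\partial z_i}=\sum_j\big(p_j^{(T)}-y_j\big)\frac{\delta_{ij}}{T}=\frac{1}{T}\big(p_i^{(T)}-y_i\big),
\]
which is the claimed identity.

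There is no real obstacle here. The only point that needs care is the normalization $\sum_k y_k=1$ in Step 1, which is what makes the $y$-weighted softmax term collapse to $p_j^{(T)}$. It is also worth stating explicitly that $T$ is treated as a constant, so that $z_i$ and $T$ are independent variables. The derivative with respect to $T$ itself, which matters when $T$ is learned, would be a separate computation: $\partial\mathcal{L}^{(T)}/\partial T=-\tfrac{1}{T^2}\sum_i (p_i^{(T)}-y_i)z_i$. The lemma does not require it.
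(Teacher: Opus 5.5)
Your proof is correct and follows the same route as the paper. Both use the chain rule through the diagonal Jacobian $\partial \tilde z_j/\partial z_i=\delta_{ij}/T$, combined with the softmax cross-entropy gradient $p_j^{(T)}-y_j$; the only difference is that you derive this gradient via log-sum-exp, making the role of $\sum_k y_k=1$ explicit, whereas the paper simply quotes it as a standard fact.
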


With fractional T, the broader logit range gives the optimizer more informative gradients to work with, helping the model learn more reliably despite the bounded nature of quantum measurements.

\subsection{Gradient Variance Improvement}

We are interested in observing parameter gradient variance across input examples as it determines how much useful directional information the optimizer receives during training. When the variance is too small, parameter updates become indistinguishable, and learning stagnates. The following proposition shows that QMT improves parameter gradient variance by $1/T^2$ and recovers from training collapse caused by bounded measurement outputs. 

\begin{proposition}[Parameter Gradient Variance Scaling]
\label{thm:variance_main}
Let $J(\theta)=\partial z/\partial \theta$ denote the quantum Jacobian.
For QMT scaling,
\begin{equation}
\nabla_\theta \mathcal{L}^{(T)}
= \frac{1}{T} J(\theta)^\top (p^{(T)} - y),
\end{equation}
and therefore, when the variance is taken over input data samples,
\begin{equation}
\mathrm{Var}\!\big[\nabla_\theta\mathcal{L}^{(T)}\big]
= \frac{1}{T^2}\,
\mathrm{Var}\!\big[J(\theta)^\top (p^{(T)} - y)\big].
\end{equation}
\end{proposition}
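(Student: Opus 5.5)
The plan is to obtain the gradient identity from the chain rule through $z$, reusing Lemma~\ref{thm:loss_main}, and then to derive the variance statement from the fact that $1/T$ is a deterministic scalar. First I fix conventions. For a single sample $(x,y)$ the loss is $\mathcal{L}^{(T)}(\theta)=\mathcal{L}\big(z(x,\theta)/T,\,y\big)$, where $\mathcal{L}$ is softmax cross-entropy. The matrix $J(\theta)\in\mathbb{R}^{K\times P}$ has entries $J_{ij}=\partial z_i/\partial\theta_j$. Because $z$ depends on $\theta$ only through the quantum state, $T$ is treated as independent of $\theta$. (If $T$ is jointly learned, it is still a separate coordinate, so $\partial T/\partial\theta=0$.)

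For the first identity, I apply the multivariate chain rule:
\[
\frac{\partial\mathcal{L}^{(T)}}{\partial\theta_j}=\sum_{i=1}^K\frac{\partial\mathcal{L}^{(T)}}{\partial z_i}\frac{\partial z_i}{\partial\theta_j}.
\]
I then substitute Lemma~\ref{thm:loss_main}, which gives $\partial\mathcal{L}^{(T)}/\partial z_i=\tfrac{1}{T}(p_i^{(T)}-y_i)$. In vector form this is exactly $\nabla_\theta\mathcal{L}^{(T)}=\tfrac1T J(\theta)^\top(p^{(T)}-y)$. For completeness I would recall the single step behind the lemma: $\partial(-\log p_c)/\partial\tilde z_i=p_i-y_i$, multiplied by $\partial\tilde z_i/\partial z_i=1/T$.

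For the variance statement, I view $(x,y)$ as random with respect to the data distribution, and treat $\theta$ and $T$ as fixed at the current iterate. Set $g(x,y)=J(\theta;x)^\top(p^{(T)}(x)-y)$, so that $\nabla_\theta\mathcal{L}^{(T)}=g/T$. Here $1/T$ is a deterministic constant, so $\mathbb{E}[g/T]=\mathbb{E}[g]/T$. The covariance then satisfies $\mathrm{Cov}(g/T)=T^{-2}\mathrm{Cov}(g)$, and the same holds componentwise or for the trace, whichever definition of $\mathrm{Var}$ of a vector is adopted. This gives the claimed $1/T^2$ factor.

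The main obstacle is interpretive rather than technical. The factor $p^{(T)}$ inside the variance still depends on $T$, so the identity is an exact rescaling, not a comparison with the $T=1$ variance. I would state this explicitly: the proposition holds with $p^{(T)}$ evaluated at the same $T$. Any claim that the variance is enhanced relative to the unscaled model requires a further argument that $\mathrm{Var}[J^\top(p^{(T)}-y)]$ does not shrink by more than $T^2$. The proposition does not assert that, and I would not attempt it here. I also need to ensure that $T$ is not data-dependent, since otherwise it could not be pulled out of the variance. This holds because $T$ is a single learned scalar shared across samples.
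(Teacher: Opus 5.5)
Your proof is correct and follows the paper's argument: the chain rule through $z$ with Lemma~\ref{thm:loss_main} gives $\nabla_\theta\mathcal{L}^{(T)}=\tfrac{1}{T}J(\theta)^\top(p^{(T)}-y)$, and $\mathrm{Var}(cX)=c^2\mathrm{Var}(X)$ with the deterministic scalar $c=1/T$ gives the variance identity. Your two caveats are accurate and are left implicit in the paper: $T$ must be the same for all samples, and $p^{(T)}$ inside the variance still depends on $T$, so the identity alone does not show a $1/T^2$ improvement over the $T=1$ model.
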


\subsection{Temperature and Loss Geometry}

The following lemma shows that the
Lipschitz constant of the loss scales inversely with temperature. As the Lipschitz constant scales to $L/T$ while using QMT, the loss becomes 
$1/T$ times more responsive to the small logit differences created by bounded quantum measurements.

\begin{lemma}[Lipschitz Scaling]
\label{thm:lipschitz_main}
If $\mathcal{L}$ is $L$ Lipschitz w.r.t.\ logits, and
$\mathcal{L}^{(T)}(z) = \mathcal{L}(z/T)$ then,
\[
|\mathcal{L}^{(T)}(z) - \mathcal{L}^{(T)}(\tilde{z})|
\le \frac{L}{T}\,\|z - \tilde{z}\| \quad
\forall z,\tilde{z}\in\mathbb{R}^K.
\]
\end{lemma}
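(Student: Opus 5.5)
The plan is to apply the Lipschitz hypothesis on $\mathcal{L}$ to the rescaled inputs and then pull the factor $1/T$ out of the norm. Fix arbitrary $z,\tilde z\in\mathbb{R}^K$ and set $u=z/T$ and $v=\tilde z/T$. These are again vectors in $\mathbb{R}^K$, because $T>0$ is a fixed scalar; when $T$ is learnable we treat it as frozen at its current value. By definition, $\mathcal{L}^{(T)}(z)-\mathcal{L}^{(T)}(\tilde z)=\mathcal{L}(u)-\mathcal{L}(v)$. Since $\mathcal{L}$ is $L$-Lipschitz with respect to logits on all of $\mathbb{R}^K$, we get
\[
|\mathcal{L}^{(T)}(z)-\mathcal{L}^{(T)}(\tilde z)| \le L\,\|u-v\| = L\,\Big\|\tfrac{1}{T}(z-\tilde z)\Big\|.
\]

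The second step uses absolute homogeneity of the norm, $\|\alpha w\|=|\alpha|\,\|w\|$, with $\alpha=1/T>0$. This gives $\|u-v\|=\frac1T\|z-\tilde z\|$, and hence the claimed bound $\frac{L}{T}\|z-\tilde z\|$. The argument holds for whichever norm was used to define the Lipschitz constant of $\mathcal{L}$, as long as the same norm appears on both sides.

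There is no real obstacle. The only points to state explicitly are:
\begin{itemize}
\item the Lipschitz property of $\mathcal{L}$ must hold globally, or at least on a set containing the rescaled points $z/T$ and $\tilde z/T$;
\item $T$ must be strictly positive, so the division is defined and $|1/T|=1/T$.
\end{itemize}

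For concreteness, we can also note why the hypothesis is satisfied for softmax cross-entropy. By Lemma~\ref{thm:loss_main} with $T=1$, the gradient with respect to the logits is $p-y$. Its Euclidean norm is at most $\sqrt2$, because $\|p-y\|_2^2\le \|p\|_2^2+1-2p_c\le 2$. The mean value theorem then gives $L=\sqrt2$. This remark is optional, since the lemma takes the Lipschitz property as a hypothesis.
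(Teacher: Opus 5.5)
Your proof is correct and follows the paper's argument exactly. Substitute $u=z/T$ and $\tilde u=\tilde z/T$, apply the Lipschitz hypothesis, and pull the factor $1/T$ out of the norm by absolute homogeneity; your optional check that softmax cross-entropy is $\sqrt{2}$-Lipschitz in the Euclidean norm is also correct, and it gives a concrete instance of the hypothesis that the paper leaves implicit.
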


\subsection{Uniform Bound on Quantum Parameter Gradients}
We show that QMT provides controlled amplification rather than
unbounded gradients. A complete
characterization of temperature therefore requires understanding of how
large parameter space gradients can become with QMT. The following result provides this uniform bound.

\begin{theorem}[Uniform Parameter Gradient Bound]
\label{thm:param_main}
For logits $z_i(\theta)\in[-1,1]$ produced by single parameter Pauli
rotations and bounded observables, the QMT scaled loss satisfies
\[
\left|
\frac{\partial \mathcal{L}^{(T)}}{\partial \theta_k}
\right|
\le \frac{K}{T},
\]
for every parameter $\theta_k$, where $K$ is the number of logits.
\end{theorem}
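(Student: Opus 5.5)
The plan is to combine the chain rule of Lemma~\ref{thm:loss_main} with a parameter-shift bound on the derivatives of each logit. By Lemma~\ref{thm:loss_main} and the chain rule (equivalently, the first identity of Proposition~\ref{thm:variance_main}),
\[
\frac{\partial \mathcal{L}^{(T)}}{\partial \theta_k}
= \frac{1}{T}\sum_{i=1}^K \big(p_i^{(T)} - y_i\big)\,\frac{\partial z_i}{\partial \theta_k}.
\]
So it suffices to bound two things separately: the residuals $|p_i^{(T)}-y_i|$, and the logit derivatives $|\partial z_i/\partial\theta_k|$, each uniformly.

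For the residuals, I would note that $p^{(T)}$ lies in the probability simplex and $y$ is one-hot. Hence $|p_i^{(T)}-y_i|\le 1$ for every $i$. One can in fact do better: $\sum_i|p_i^{(T)}-y_i| = 2(1-p_c^{(T)})\le 2$. I will keep the crude entrywise bound, since the target constant is $K$.

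For the logit derivatives, I would use the hypothesis that $\theta_k$ enters through a single Pauli rotation $e^{-i\theta_k P/2}$ with $P^2=I$. Then $z_i(\theta)$, viewed as a function of $\theta_k$ with the other parameters fixed, is a trigonometric polynomial of the form $A+B\cos\theta_k+C\sin\theta_k$. The parameter-shift rule then gives
\[
\frac{\partial z_i}{\partial\theta_k}=\tfrac12\big[z_i(\theta_k+\tfrac{\pi}{2})-z_i(\theta_k-\tfrac{\pi}{2})\big].
\]
Both shifted values are expectations of the same bounded observable and so lie in $[-1,1]$. Therefore $|\partial z_i/\partial\theta_k|\le 1$.

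Combining via the triangle inequality gives $\big|\partial\mathcal{L}^{(T)}/\partial\theta_k\big|\le \frac1T\sum_{i=1}^K 1\cdot 1 = K/T$, which is the claim. The sharper residual bound would even give $2/T$, but that is not needed.

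The main obstacle is justifying $|\partial z_i/\partial\theta_k|\le1$ in the data re-uploading circuit. There, the same $\theta_k$ could in principle appear in several gates, in which case the derivative is a sum of several parameter-shift terms and could exceed $1$. I would handle this by reading the hypothesis ``single parameter Pauli rotations'' as saying that each $\theta_k$ occurs in exactly one gate. In the paper's architecture the trainable rotations are indeed distinct from the data-dependent ones, so the single-gate parameter-shift argument applies directly.
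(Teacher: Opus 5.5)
Your proposal is correct and is essentially the paper's own argument: the chain rule via Lemma~\ref{thm:loss_main}, the entrywise bound $|p_i^{(T)}-y_i|\le 1$, and the parameter-shift rule giving $|\partial z_i/\partial\theta_k|\le 1$, with the same implicit assumption that $\theta_k$ occurs in a single gate, $U(\theta)=V\,e^{-i\theta_k G_k}W$ (the supplement also derives the looser commutator bound $2\|G_k\|\|O_i\|$ before invoking parameter shift). Your explicit half-angle convention $e^{-i\theta_k P/2}$ is what makes $|\partial z_i/\partial\theta_k|\le 1$, and hence the constant $K$, valid; under the supplement's convention $e^{-i\theta_k G_k}$ with Pauli $G_k$, the logit derivative can reach $2$.
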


Pauli generated rotations satisfy the parameter shift rule and because
each shifted expectation remains in $[-1,1]$, the Jacobian entries also follow
$|\partial z_i/\partial \theta_k|\le 1$. QMT scaling gives
$\partial\mathcal{L}^{(T)}/\partial\theta_k
= (1/T)\sum_i (p_i^{(T)}-y_i)(\partial z_i/\partial\theta_k)$.
Using $|p_i^{(T)}-y_i|\le 1$ and the bounded Jacobian yields the result.
\hfill$\square$

\medskip
This establishes that temperature provides a controlled mechanism for
amplifying parameter gradients. When combined with the $1/T^2$
variance scaling from Proposition ~\ref{thm:variance_main}, the result shows
that temperature directly strengthens both the magnitude and the
data  dependent variability of gradients, improving trainability in  Hybrid QNNs.

\section{Experiments}
In our experiments, we use a hybrid Classical-Quantum Neural Network (QNN) architecture
consisting of two classical convolutional layers, a fully connected layer, and
a quantum neural network with a configurable number of qubits and quantum
layers. A representative block diagram of the pipeline is shown in
Fig.~\ref{fig:hybrid}. In this example, the classical frontend uses four
convolutional filters, and the quantum network consists of six qubits and four quantum layers.

The quantum network employs data-encoding gates implemented using $R_Y$
rotations, followed by parameterized $R_X$ rotations and entangling $CZ$ gates,
as illustrated in Fig.~\ref{fig:quantum_part}. The number of measurement
operators is set equal to the number of output classes, e.g., in a
four class classification task, four measurement wires are used. We utilize angle encoding with data re-uploading, and Pauli Z measurements in our experiments. Models are trained for 30 epochs with batch size 64 using the Adam optimizer (learning rate $5\times10^{-3}$) and a cosine annealing learning-rate schedule with $\eta_{\min}=1\times10^{-5}$, unless otherwise specified. We report results as mean $\pm$ standard deviation over 5 independent trials. All experiments were conducted using statevector-based simulations in PennyLane.

\subsection{Datasets}
\begin{figure}[!htbp]
    \centering
    \begin{subfigure}[b]{0.45\textwidth}
        \includegraphics[width=\textwidth]{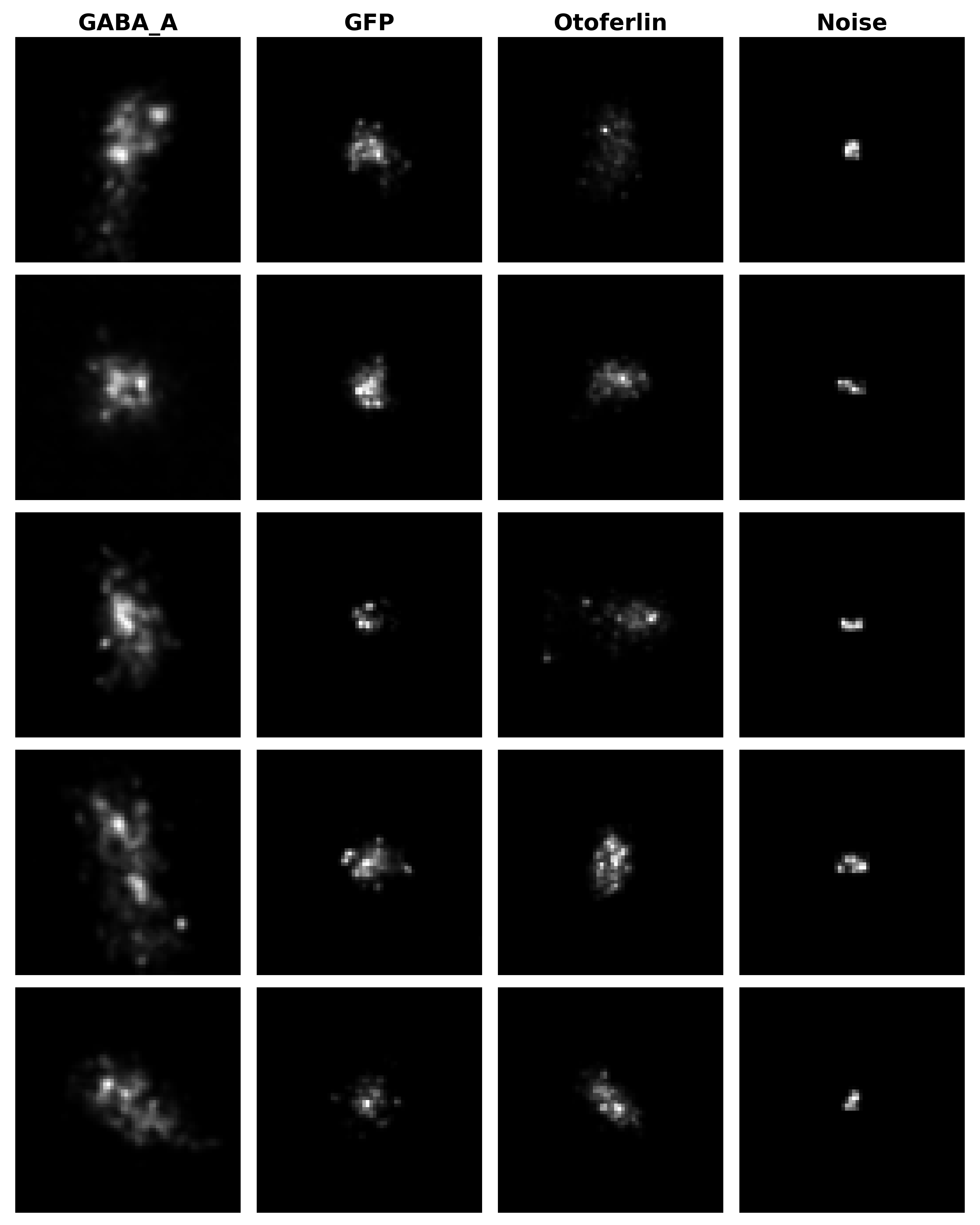}
        \caption{Training dataset}
        \label{fig:ggon_training}
    \end{subfigure}
    \hspace{0.02\textwidth}
    \begin{subfigure}[b]{0.45\textwidth}
        \includegraphics[width=\textwidth]{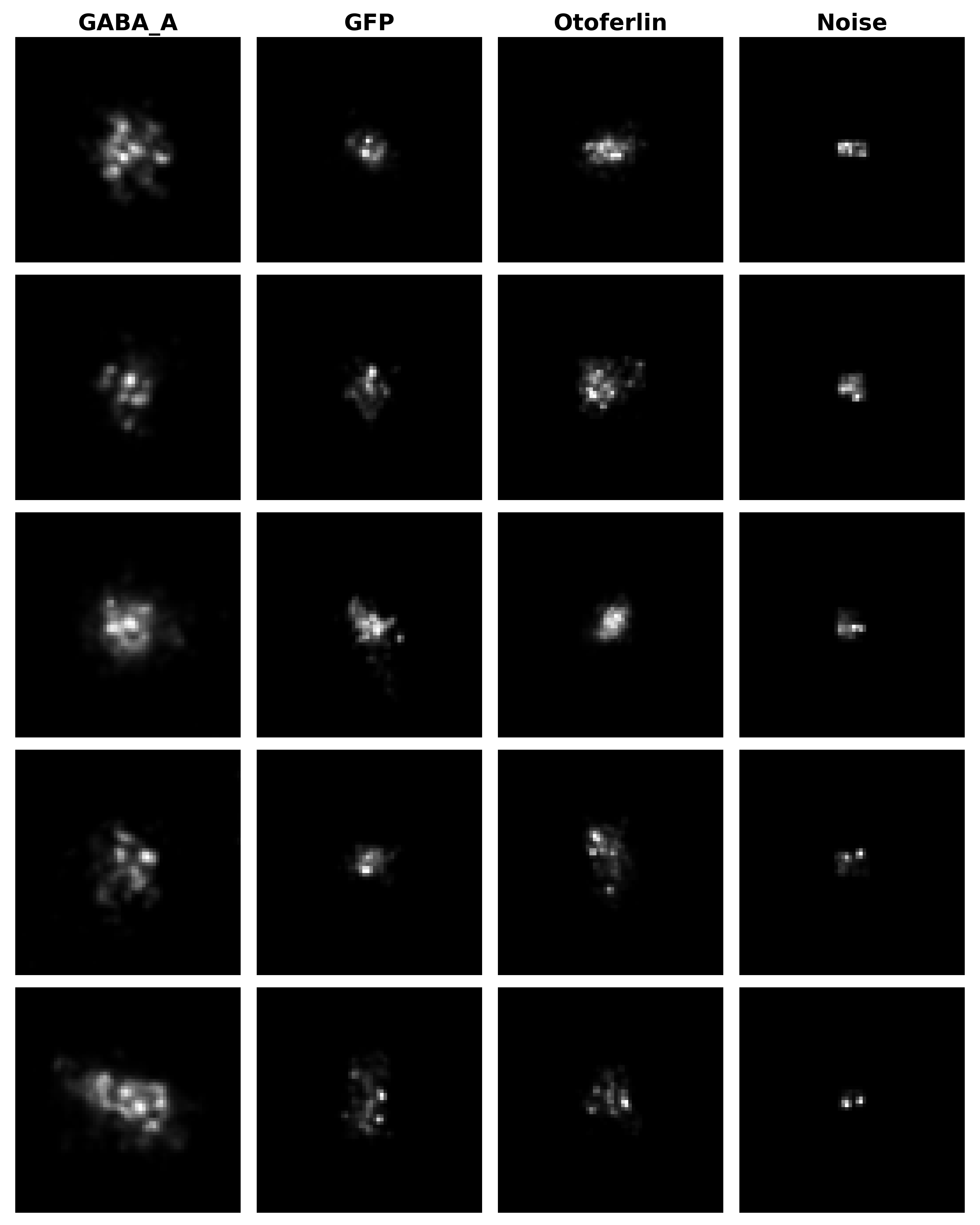}
        \caption{Testing dataset}
        \label{fig:ggon_testing}
    \end{subfigure}
    \caption{Representative training (left) and test (right) samples from the protein dataset. Expanded protein samples and background noise in fluorescence microscopy}
    \label{fig:ggon_data}
\end{figure}

We evaluated the proposed method on two types of datasets: (i) a protein
microscopy dataset and (ii) standard vision benchmark datasets. A $70\%/30\%$ train-test split is used in all 
experiments. The protein dataset is acquired using one-step nanoscale expansion (ONE) microscopy protocol~\cite{Shaib2025One}, which enables nanometer resolution
imaging of protein structures using an optical microscope.
Representative samples from the dataset are shown in
Fig.~\ref{fig:ggon_data}. The dataset contains four classes: three protein
structures (GABA\_A, GFP, and Otoferlin) and a background noise class captured by
the fluorescence microscope, resulting in a four class classification task.
Each class contains 3000 images, and a total of 12,000 images.  The left panel of Fig.~\ref{fig:ggon_data} shows representative
training samples, while the right panel shows representative test samples.

In addition, standard vision benchmark datasets are used to assess the
generality of the proposed method. Specifically, the first six classes of
Fashion MNIST and Overhead MNIST are utilized as multi class classification
benchmarks in our experiments.

\section{Results:}
In this section, we study the results of our experiments in the following order- (i) Observation of Logits or Measurement output range, (ii) Reduction in the Loss value, (iii) Improvement in classification margin, (iv) Overcoming training collapses and establishing training stability, (v) Improvement in parameter gradient strength, (vi) Impact across various QNN architecture and datasets.

\subsection{Observation of Logits or Measurement output range}
We first compare the effective logit range produced by classical networks and
hybrid QNNs. For each epoch, we compute the logit range
$(\max_k z_k - \min_k z_k)$ over the training set and report its evolution during
training, as shown in Fig.~\ref{fig:logits_epochs_all}. The first row corresponds
to the protein dataset (Fig.~\ref{fig:logits_epochs_ggon}) and the second row to
Fashion MNIST (Fig.~\ref{fig:logits_epochs_fmnist}). Within each row, the left
panel shows the classical model, and the right panel shows the hybrid QNN.

\begin{figure}[H]
    \centering
    \begin{subfigure}[b]{1\textwidth}
        \includegraphics[width=\textwidth]{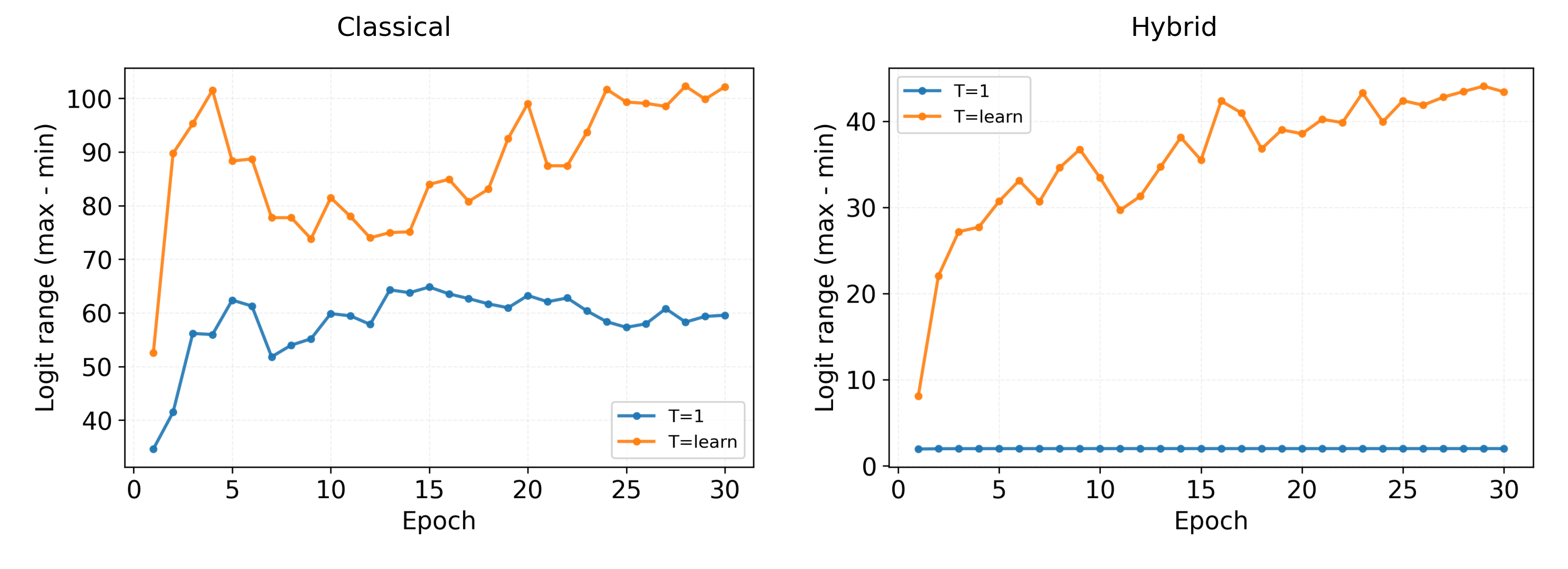}
        \caption{Protein Dataset}
        \label{fig:logits_epochs_ggon}
    \end{subfigure}
    
    
    \begin{subfigure}[b]{1\textwidth}
        \includegraphics[width=\textwidth]{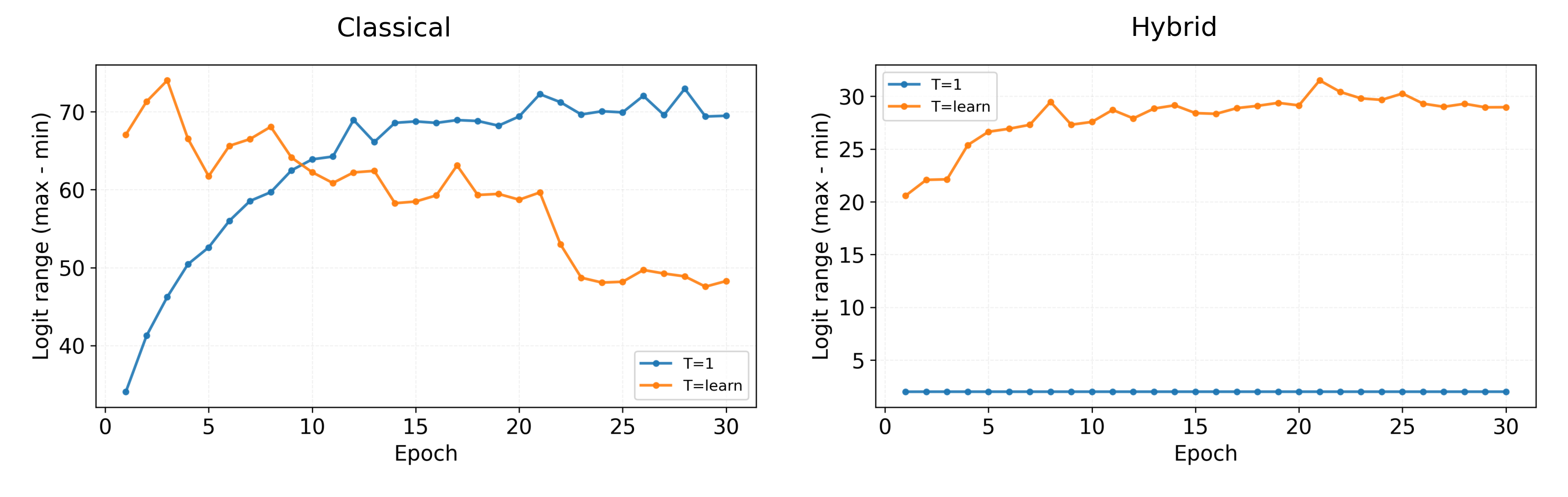}
        \caption{Fashion MNIST}
        \label{fig:logits_epochs_fmnist}
    \end{subfigure}
    \caption{Logit range of Training samples over epochs during training}
    \label{fig:logits_epochs_all} 
    \vspace{-0.4 cm}
\end{figure}

For hybrid QNNs with fixed $T=1$, the measurement outputs satisfy
$z_k \in [-1,1]$, constraining the raw logit range to at most $2$, which is
clearly visible in the right panels of Fig.~\ref{fig:logits_epochs_all}.
Classical models do not exhibit this constraint and therefore explore a larger
logit range during optimization, as shown in the left panels of the same figure.
When QMT is enabled, the rescaled logits $\tilde{z}_k = z_k/T$ expand the
effective logit range for hybrid QNNs, whereas learning $T$ in classical models
alters these ranges in a dataset-dependent manner
without materially affecting optimization as shown in later sections.

Similarly, once the model is trained, the same behavior is observed on the test set. Classical models produce logit ranges well beyond $2$, whereas hybrid QNNs remain confined under fixed $T=1$ and expand their effective logit range only when QMT is learned. The corresponding test-set logit range distribution is reported in Supplementary Fig.~B.12.These observations establish that QMT scaling at the measurement interface expands the effective logit space of hybrid QNNs.

\subsection{Reduction in the Loss value}
We next examine how the expanded effective logit range affects optimization.
Training loss curves for both datasets are shown in Fig.~\ref{fig:loss_epoch_all},
with the protein dataset in Fig.~\ref{fig:loss_epoch_ggon} and Fashion MNIST in
Fig.~\ref{fig:loss_epoch_fmnist}. For classical models, the loss trajectories under fixed $T=1$ and learned $T$
are nearly identical across epochs, indicating that temperature scaling does
not substantially alter optimization behavior when logits are unconstrained.
In contrast, hybrid QNNs with learned QMT converge faster and reach a
significantly lower loss than their fixed-$T$ counterparts, as consistently
observed in the right panels of Fig.~\ref{fig:loss_epoch_all}. These empirical observations are consistent with the theoretical analysis in
Theorem~\ref{thm:compression_main} and Proposition\ref{thm:loss_main}, which predicts improved
loss sensitivity when the effective logit scale is expanded.
\begin{figure}[H]
    \centering
    \begin{subfigure}[b]{1\textwidth}
        \includegraphics[width=\textwidth]{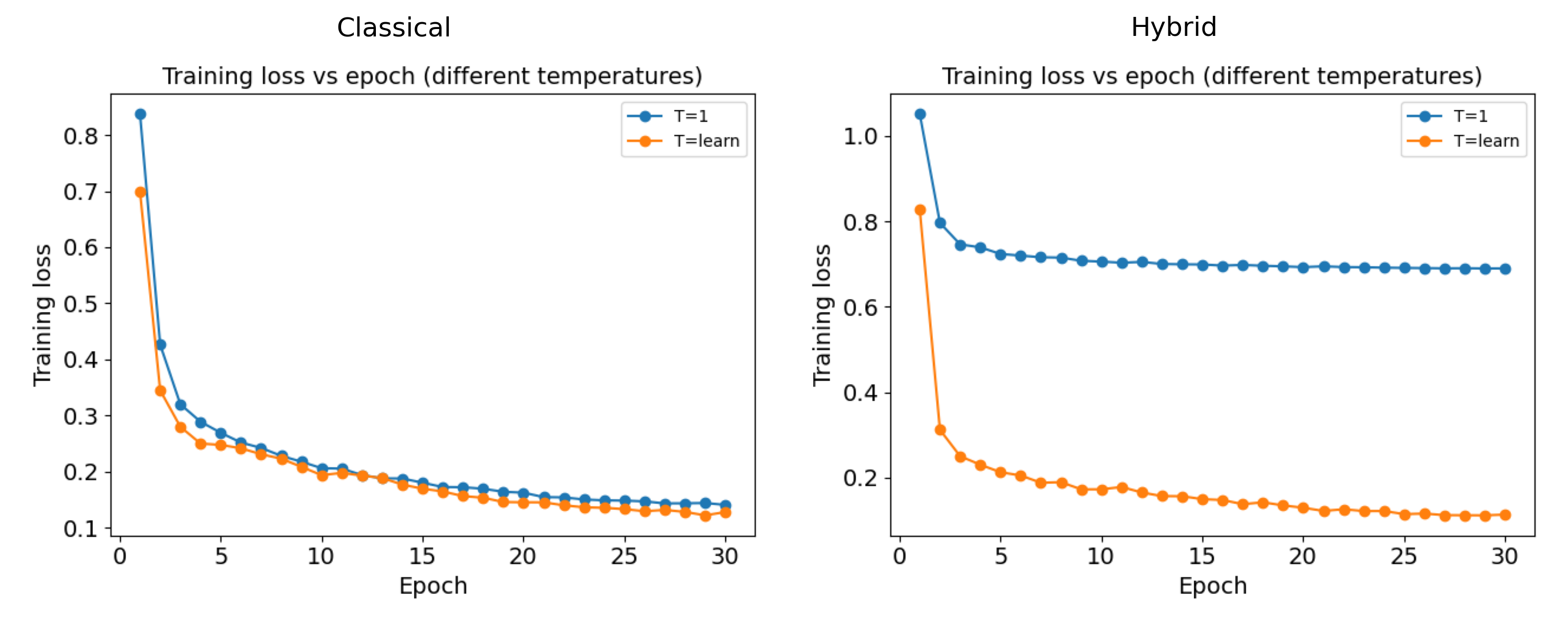}
        \caption{Protein Dataset}
        \label{fig:loss_epoch_ggon}
    \end{subfigure}
    
    
    \begin{subfigure}[b]{1\textwidth}
        \includegraphics[width=\textwidth]{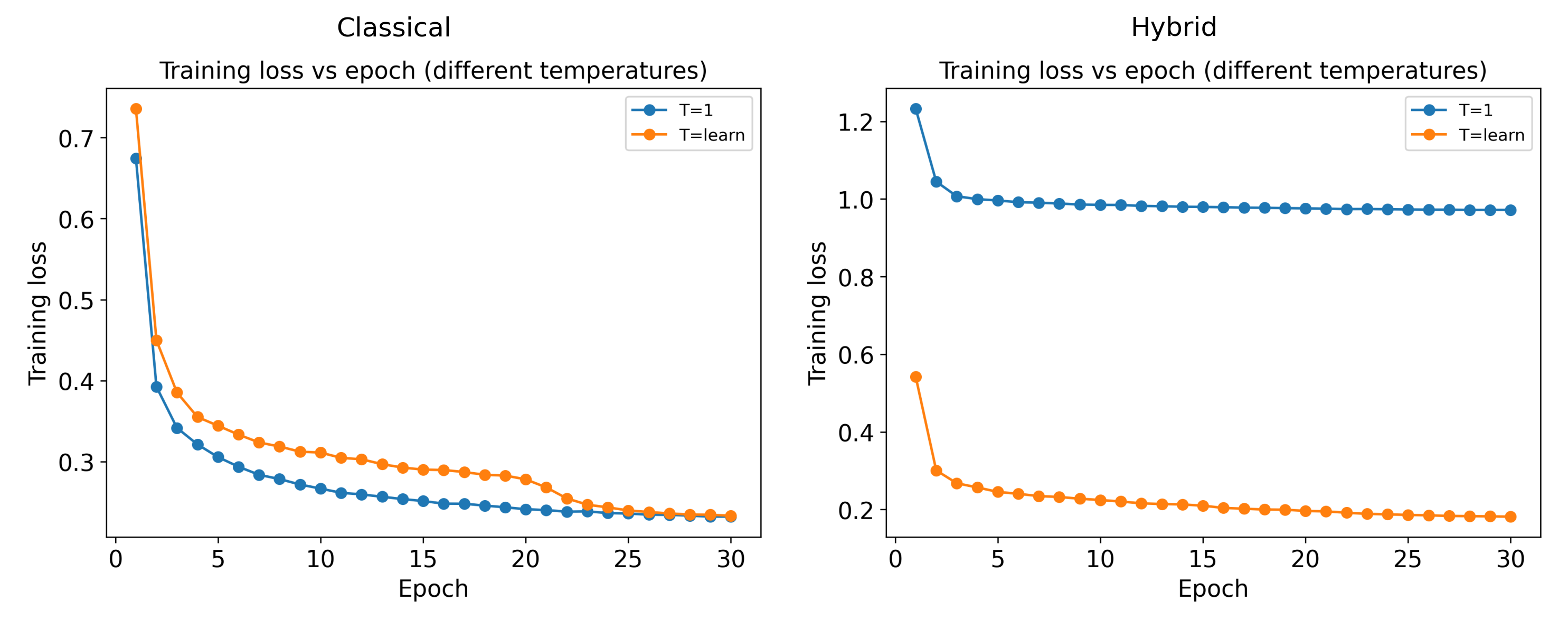}
        \caption{Fashion MNIST}
        \label{fig:loss_epoch_fmnist}
    \end{subfigure}
    \caption{Training loss over epochs for classical and hybrid neural networks}
    \label{fig:loss_epoch_all}
    \vspace{-4 mm}
\end{figure}

\subsection{Improvement in classification margin}
To assess class separation, we evaluate the classification margin
$m(x)= z_{y}(x)-\max_{k\neq y} z_k(x)$ on the test set. The resulting margin
distributions are shown in Fig.~\ref{fig:class_margin_all}, with protein results
in Fig.~\ref{fig:class_margin_ggon} and Fashion-MNIST results in
Fig.~\ref{fig:class_margin_fmnist}. 
For hybrid QNNs, learned QMT shifts the margin distribution toward larger
positive values, indicating improved separation between the true class and the
strongest competing class. The increase in mean margin is modest for classical
models (approximately $1$--$1.5\times$) but substantially larger for hybrid QNNs
(approximately $8$--$10\times$ relative to fixed $T=1$), as evidenced by the
horizontal shift and increased mass at positive margins in
Fig.~\ref{fig:class_margin_all}. These results indicate that QMT improves not only optimization but also the
confidence of hybrid QNN predictions.

\begin{figure}[H]
    \centering
    \begin{subfigure}[b]{1\textwidth}
        \includegraphics[width=\textwidth]{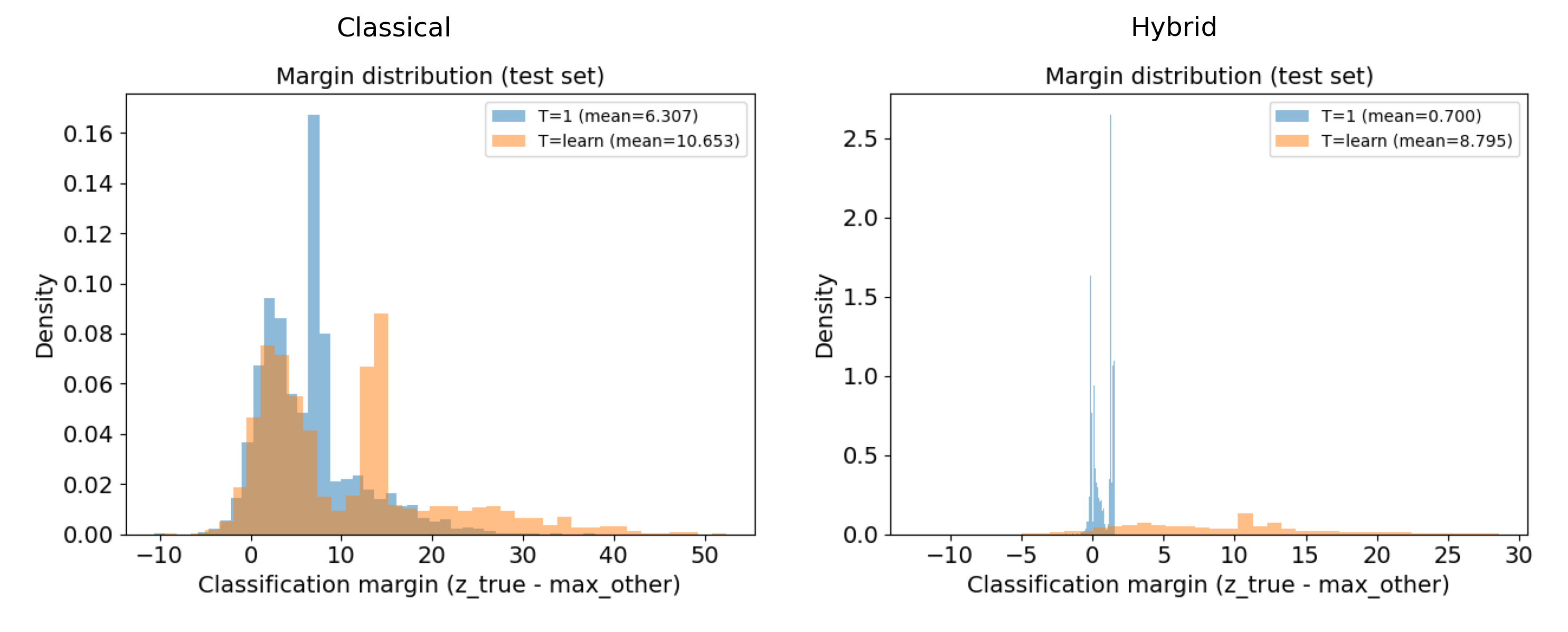}
        \caption{Protein Dataset}
        \label{fig:class_margin_ggon}
    \end{subfigure}
    
    
    \begin{subfigure}[b]{1\textwidth}
        \includegraphics[width=\textwidth]{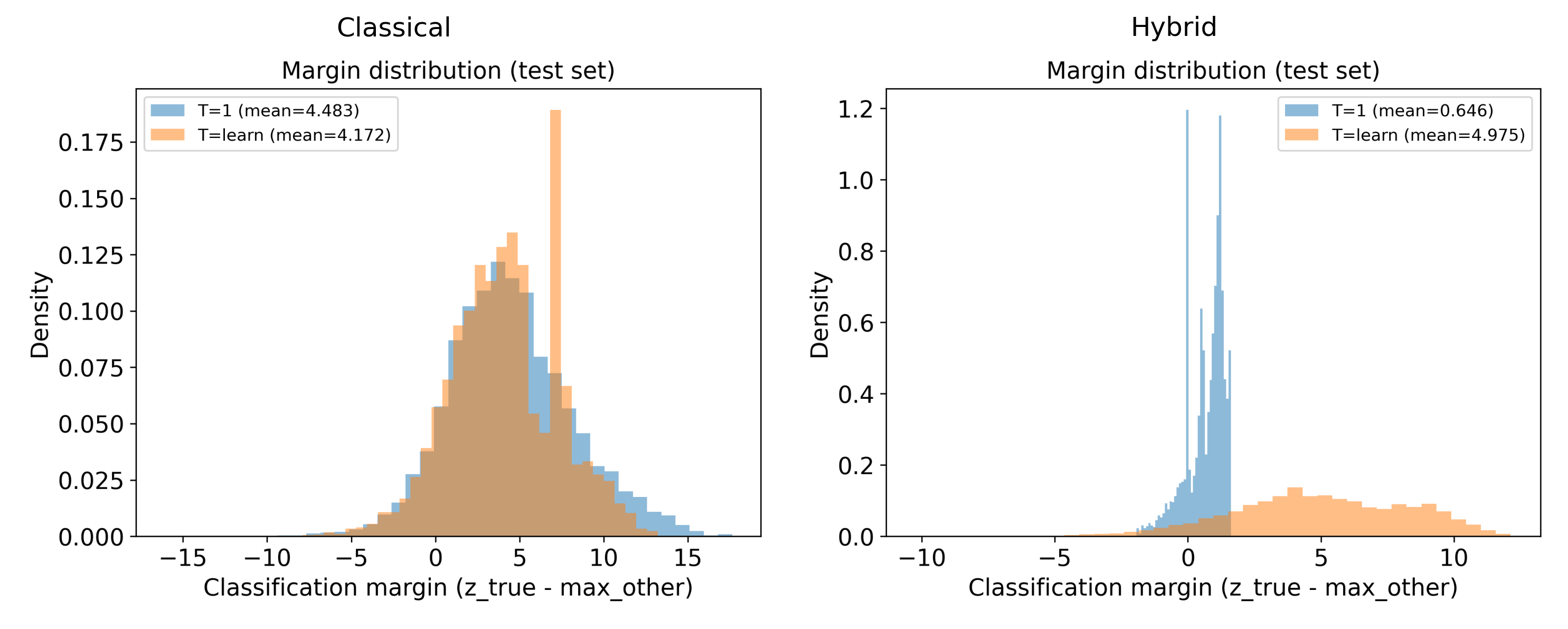}
        \caption{Fashion MNIST}
        \label{fig:class_margin_fmnist}
    \end{subfigure}
    \caption{Test set classification margin distributions with and without QMT for classical NN and hybrid QNN}
    \label{fig:class_margin_all}
    \vspace{-4 mm}
\end{figure}

\subsection{Overcoming Training Collapses and establishing Training Stability}
\begin{figure}[H]
    \centering
    \begin{subfigure}[b]{1\textwidth}
        \includegraphics[width=\textwidth]{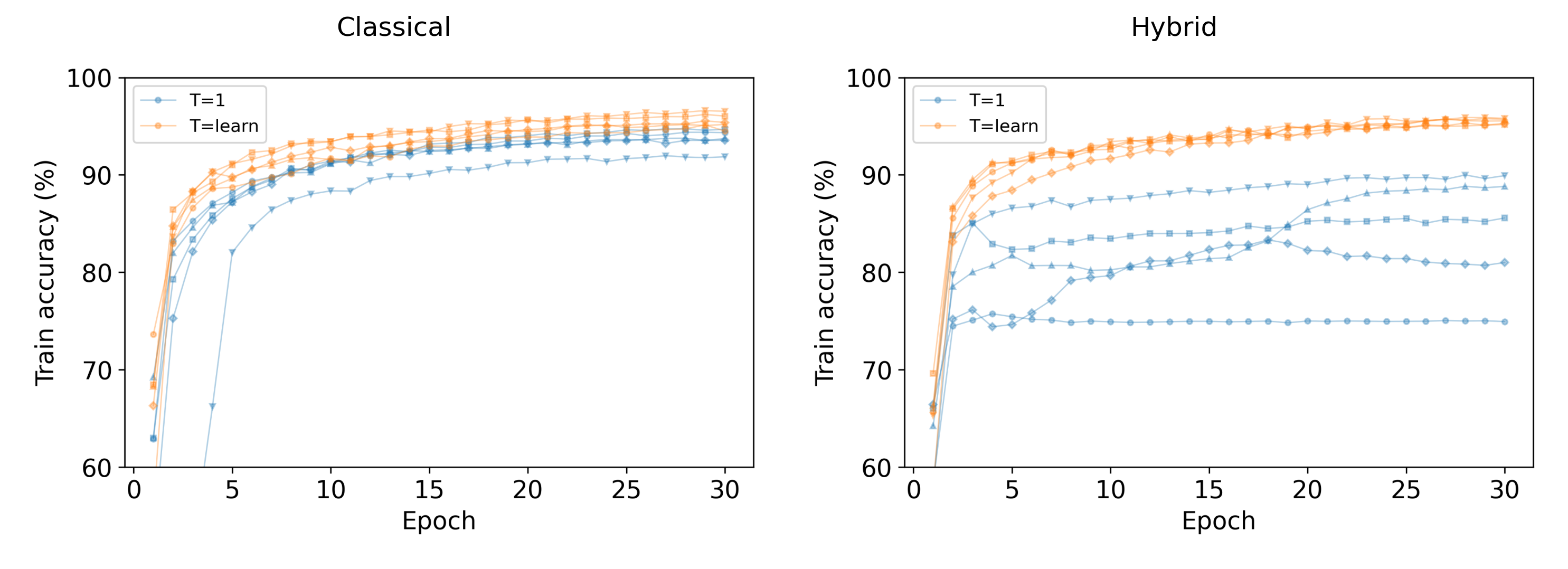}
        \caption{Protein Dataset}
        \label{fig:train_trial_ggon}
    \end{subfigure}
    
    
    \begin{subfigure}[b]{1\textwidth}
        \includegraphics[width=\textwidth]{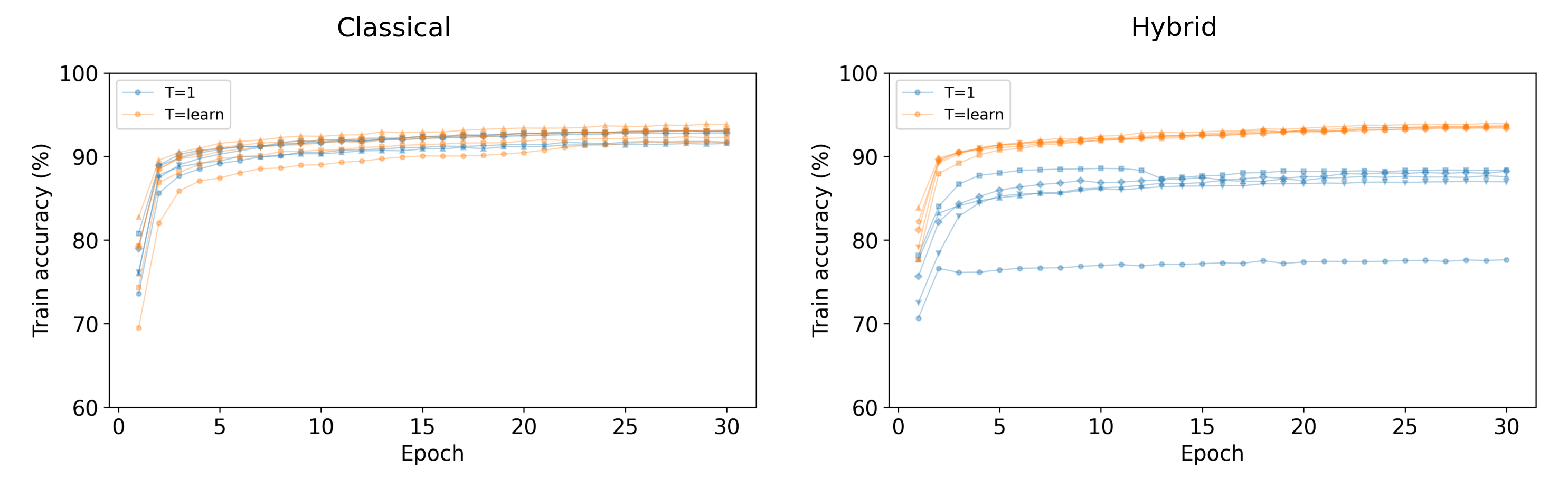}
        \caption{Fashion MNIST}
        \label{fig:train_trial_fmnist}
    \end{subfigure}
    \caption{Training accuracy across five random initializations, illustrating stability effects of QMT for hybrid model}
    \label{fig:train_trial_all} 
\end{figure}
The advantage of introducing learnable QMT scaling is multifold, but the main advantage is that QMT can overcome the training collapses and can provide stability in training along with improvement in the performance. We verify this experimentally using the same model and the same dataset across five trials. Training accuracy trajectories are shown in Fig.~\ref{fig:train_trial_all},
with protein results in Fig.~\ref{fig:train_trial_ggon} and Fashion MNIST
results in Fig.~\ref{fig:train_trial_fmnist}.

For classical models, fixed-$T$ and learned-$T$ runs exhibit similar behavior
across trials, with comparable convergence profiles. In contrast, hybrid QNNs
trained with fixed $T=1$ show pronounced trial-to-trial variability and
occasional training collapse, as seen in the right panels of
Fig.~\ref{fig:train_trial_all}. When QMT is learned, hybrid QNNs consistently
achieve higher training accuracy with substantially reduced variability across
all trials and both datasets.

\subsection{Improvement in parameter gradient strengths}
\begin{figure}[H]
    \centering
    \begin{subfigure}[b]{1\textwidth}
        \includegraphics[width=\textwidth]{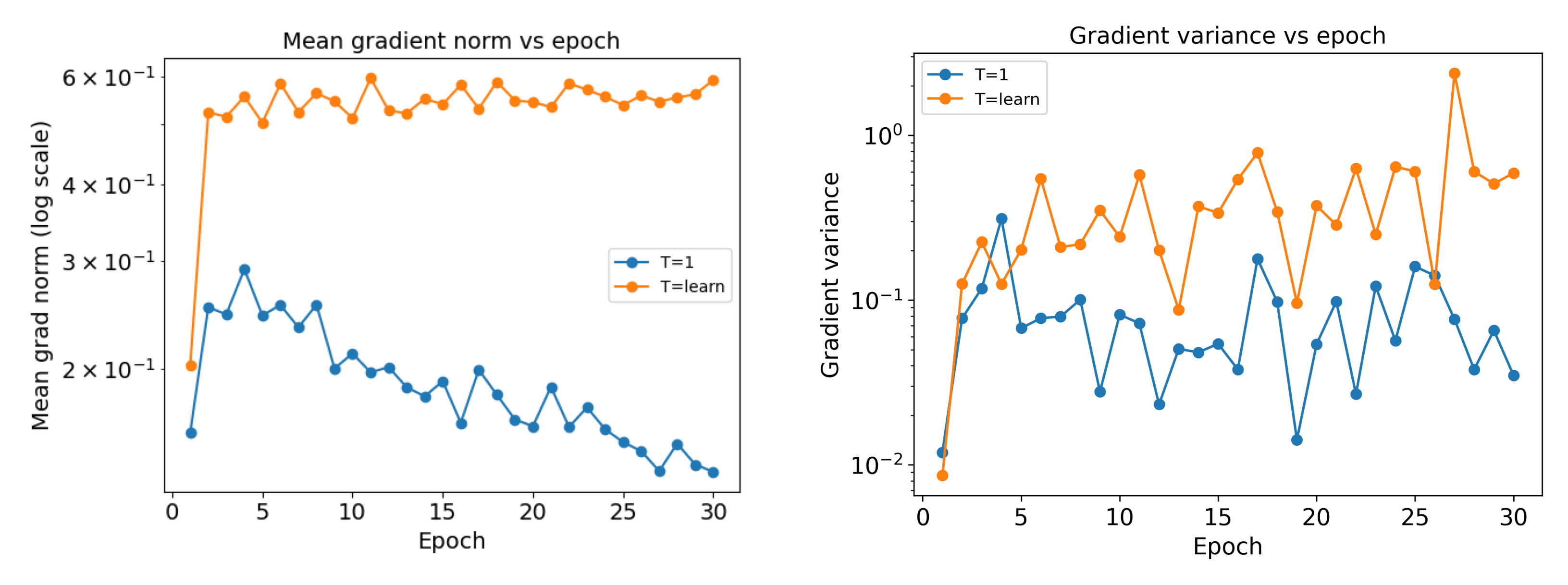}
        \caption{Protein Dataset}
        \label{fig:grad_epoch_ggon}
    \end{subfigure}
    
    
    \begin{subfigure}[b]{1\textwidth}
        \includegraphics[width=\textwidth]{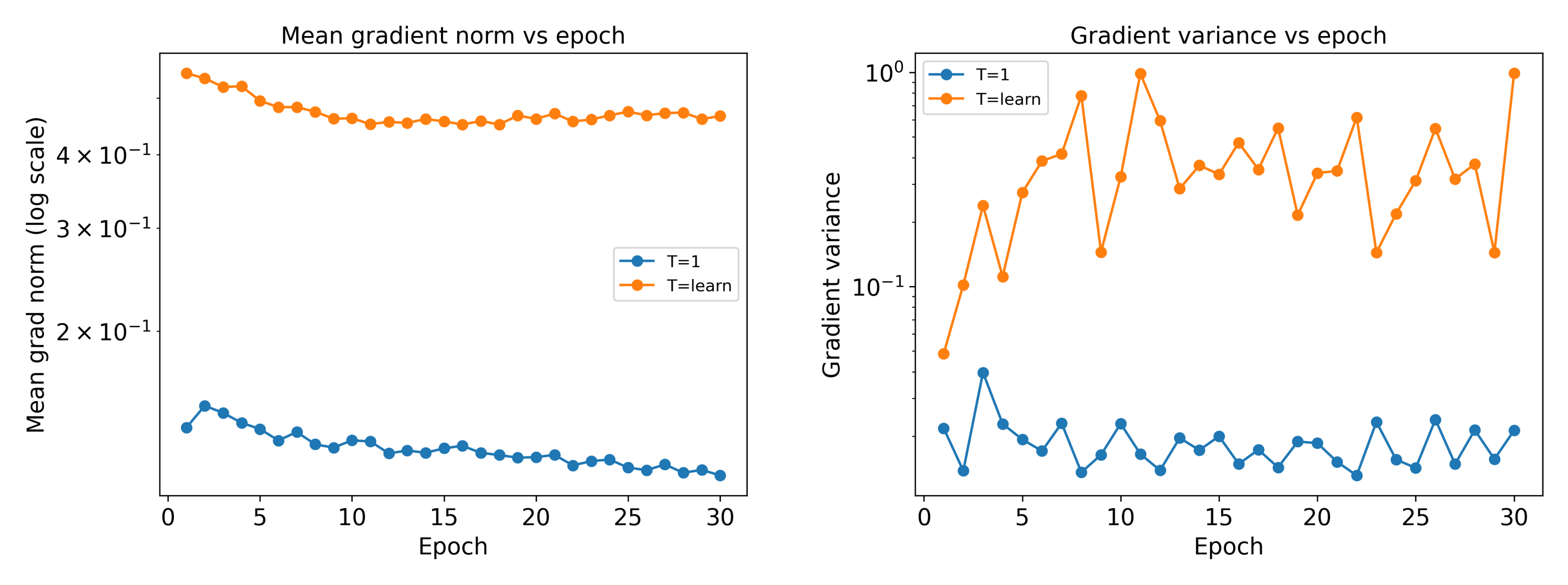}
        \caption{Fashion MNIST}
        \label{fig:grad_epoch_fmnist}
    \end{subfigure}
    \caption{Gradient strength of the trainable parameters over epochs for hybrid QNN used for protein classification task.}
    \label{fig:grad_epoch_all} 
\end{figure}

We have mathematically analyzed how the strength of the parameter gradients gets improved previously in Proposition \ref{thm:variance_main}. Here, we observe the improvement in training accuracy and stability happens as the gradients of the model parameters gets strengthened due to QMT scaling. We observe how gradient norm and gradient variance of the parameters behave during the course of training the model as shown in Fig. \ref{fig:grad_epoch_all}. The mean gradient norm is computed by averaging per parameter gradient norms across parameters and mini-batches, while gradient variance per paramter is computed using all training examples of the first batch in each epoch. High mean gradient norm indicates that the parameter gradient strength gets improved, while higher gradient variance indicates more diverse gradient directions for different parameters. Fig. \ref{fig:grad_epoch_all} shows gradient variance gets improved by more than 10 times while using QMT scaling, which helps the model to learn better. The right plot of Fig. \ref{fig:grad_epoch_all} shows that the gradient variance is too low $(\approx 10^{-2})$ when no QMT is used.
This restoration of gradient variance aligns with the improved convergence and
training stability observed in Fig.~\ref{fig:loss_epoch_all} and
Fig.~\ref{fig:train_trial_all}.

\subsection{Impact while varying QNN architecture and Datasets}

\begin{figure} [!htbp]
    \centering
    \includegraphics[width=\columnwidth]{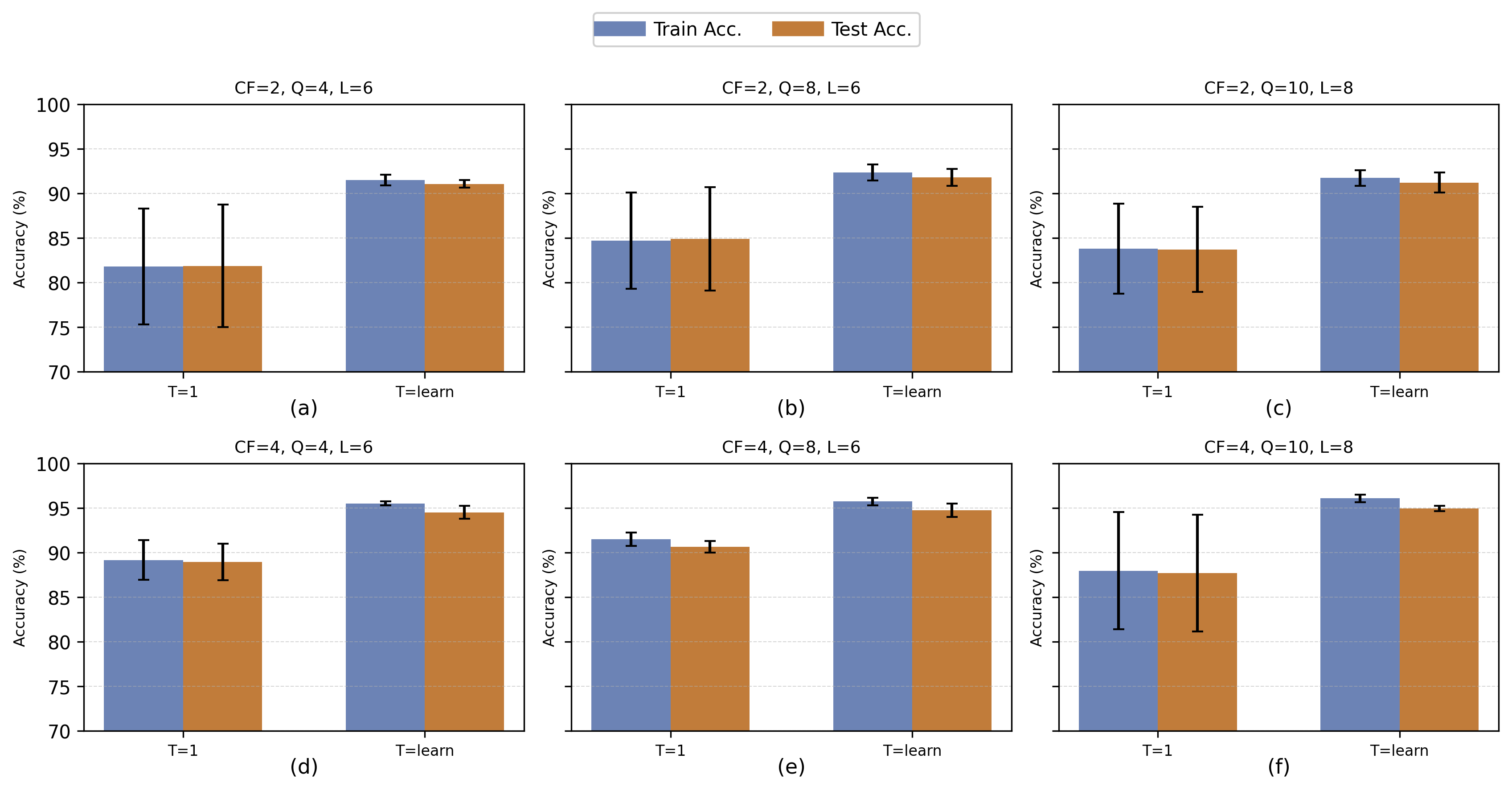}
    \caption{QNN performance comparison for fixed $T=1$ and Learned T while varying architecture configuration. Mean test accuracy and standard deviation on the protein dataset.}
    \label{fig:acc_ggon}
\end{figure}

Figure~\ref{fig:acc_ggon} summarizes the effect of QMT scaling across architectural configurations for protein classification, reporting mean test accuracy and standard deviation over five independent trials while varying the number of convolutional filters (CF), qubits (QB), and quantum layers (QL). Across all configurations, hybrid QNNs with learned temperature consistently outperform their fixed $T=1 $ counterparts in both accuracy and stability.

Without QMT, hybrid QNNs exhibit significant trial-to-trial variability, with the standard deviation of test accuracy exceeding 5.8\% in several settings (Fig.~\ref{fig:acc_ggon}(a,b)). In contrast, learning QMT reduces this variability to below 1\% across trials while simultaneously improving mean accuracy. This stabilization persists for deeper and wider quantum circuits, including configurations with 10 qubits and 8 quantum layers (Fig.~\ref{fig:acc_ggon}(c,f)), indicating that architectural scaling does not resolve training instability without QMT.

Notably, increasing classical capacity does not compensate for the lack of temperature scaling: a hybrid QNN with learned QMT and having two convolutional filters achieves a higher mean accuracy than the same QNN model with four filters without QMT scaling (Fig.~\ref{fig:acc_ggon}(c,f)), highlighting that QMT addresses a distinct optimization bottleneck at the quantum measurement interface.

\begin{table}[H]
\caption{Performance comparison between classical and hybrid models with fixed and learned temperature scaling.}
\label{table:classical_vs_hybrid}
\resizebox{\textwidth}{!}{
\begin{tabular}{@{}ccccccccccccccc@{}}
\toprule
\multirow{2}{*}{Dataset}         & \multicolumn{3}{c}{Arch Config}                              & \multirow{2}{*}{Temp Mode} & \multicolumn{5}{c}{Classical NN}                         & \multicolumn{5}{c}{Hybrid Quantum NN}                            \\ \cmidrule(lr){2-4} \cmidrule(lr){6-10} \cmidrule(lr){11-15} 
                                 & CF                 & QB                 & QL                 &                            & $T_{\text{mean}}$ & $T_{\text{std}}$ & Train Acc (\%) & Test Acc (\%) & Params & $T_{\text{mean}}$ & $T_{\text{std}}$ & Train Acc (\%) & Test Acc (\%)         & Params \\ \midrule
\multirow{8}{*}{Protein Dataset} & \multirow{2}{*}{2} & \multirow{2}{*}{6} & \multirow{2}{*}{4} & Fixed                      & 1.0    & 0.0   & 89.62 ± 2.53   & 89.65 ± 2.13  & 244    & 1.0    & 0.0   & 81.78 ± 5.76   & 81.44 ± 5.76          & 240    \\
                                 &                    &                    &                    & Learned                    & 0.121  & 0.043 & 91.24 ± 1.04   & 90.57 ± 1.07  & 245    & 0.05   & 0.011 & 92.06 ± 0.74   & \textbf{91.98 ± 0.44} & 241    \\
                                 & \multirow{2}{*}{4} & \multirow{2}{*}{6} & \multirow{2}{*}{4} & Fixed                      & 1.0    & 0.0   & 94.14 ± 0.20   & 93.33 ± 0.36  & 654    & 1.0    & 0.0   & 83.60 ± 6.99   & 83.29 ± 6.68          & 650    \\
                                 &                    &                    &                    & Learned                    & 0.095  & 0.023 & 95.07 ± 0.53   & 94.06 ± 0.37  & 655    & 0.04   & 0.008 & 95.73 ± 0.62   & \textbf{94.54 ± 0.61} & 651    \\
                                 & \multirow{2}{*}{4} & \multirow{2}{*}{4} & \multirow{2}{*}{6} & Fixed                      & 1.0    & 0.0   & 94.65 ± 0.62   & 93.82 ± 0.64  & 612    & 1.0    & 0.0   & 89.16 ± 2.23   & 88.94 ± 2.07          & 616    \\
                                 &                    &                    &                    & Learned                    & 0.066  & 0.014 & 94.63 ± 1.48   & 93.60 ± 1.32  & 613    & 0.043  & 0.008 & 95.51 ± 0.23   & \textbf{94.51 ± 0.72} & 617    \\
                                 & \multirow{2}{*}{4} & \multirow{2}{*}{8} & \multirow{2}{*}{6} & Fixed                      & 1.0    & 0.0   & 94.78 ± 0.49   & 93.78 ± 0.34  & 696    & 1.0    & 0.0   & 91.49 ± 0.75   & 90.62 ± 0.65          & 708    \\
                                 &                    &                    &                    & Learned                    & 0.109  & 0.031 & 95.10 ± 0.95   & 94.26 ± 0.89  & 697    & 0.039  & 0.007 & 95.72 ± 0.44   & \textbf{94.74 ± 0.74} & 709    \\ \hline
\multirow{8}{*}{Fashion MNIST}   & \multirow{2}{*}{2} & \multirow{2}{*}{6} & \multirow{2}{*}{4} & Fixed                      & 1.0    & 0.0   & 90.07 ± 0.55   & 89.24 ± 0.35  & 342    & 1.0    & 0.0   & 84.74 ± 0.70   & 83.88 ± 0.50          & 324    \\
                                 &                    &                    &                    & Learned                    & 0.234  & 0.044 & 90.91 ± 0.63   & 90.08 ± 0.71  & 343    & 0.052  & 0.009 & 90.74 ± 0.68   & \textbf{90.18 ± 0.53} & 325    \\
                                 & \multirow{2}{*}{4} & \multirow{2}{*}{6} & \multirow{2}{*}{4} & Fixed                      & 1.0    & 0.0   & 92.82 ± 0.53   & 92.08 ± 0.51  & 708    & 1.0    & 0.0   & 84.95 ± 4.48   & 84.32 ± 4.53          & 690    \\
                                 &                    &                    &                    & Learned                    & 0.317  & 0.056 & 93.14 ± 0.51   & 92.11 ± 0.51  & 709    & 0.054  & 0.007 & 93.41 ± 0.32   & \textbf{92.53 ± 0.56} & 691    \\
                                 & \multirow{2}{*}{4} & \multirow{2}{*}{6} & \multirow{2}{*}{6} & Fixed                      & 1.0    & 0.0   & 92.22 ± 1.45   & 91.10 ± 1.50  & 708    & 1.0    & 0.0   & 86.64 ± 3.23   & 85.90 ± 3.21          & 702    \\
                                 &                    &                    &                    & Learned                    & 0.275  & 0.154 & 92.81 ± 1.09   & 91.95 ± 1.03  & 709    & 0.062  & 0.008 & 93.54 ± 0.21   & \textbf{92.60 ± 0.36} & 703    \\
                                 & \multirow{2}{*}{4} & \multirow{2}{*}{8} & \multirow{2}{*}{6} & Fixed                      & 1.0    & 0.0   & 93.25 ± 0.34   & 92.30 ± 0.29  & 850    & 1.0    & 0.0   & 88.91 ± 2.50   & 88.28 ± 2.72          & 844    \\
                                 &                    &                    &                    & Learned                    & 0.444  & 0.134 & 93.41 ± 0.27   & 92.50 ± 0.37  & 851    & 0.063  & 0.005 & 93.79 ± 0.28   & \textbf{92.77 ± 0.22} & 845    \\ \hline
\end{tabular}}
\end{table}

Quantitative comparisons between classical and hybrid models with comparable parameter counts are reported in Table~\ref{table:classical_vs_hybrid} for both the protein dataset and Fashion MNIST. Learned QMT improves mean test accuracy by 10.5\% and 6.3\%, respectively, for the configuration with CF$=2$, QB$=6$, and QL$=4$, while substantially reducing variance across trials.

The fluorescence microscopy data used here contain proteins at nanometer resolution and exhibit heterogeneous structural patterns. The effect of QMT on improved class separation is illustrated in Fig.~\ref{fig:difficulty_protein}. The margin is defined as the confidence gap between the top-1 and top-2 predicted class probabilities at 
$T=1$. Smaller margins indicate weaker class distinction and increased classification difficulty. When the margin is significantly large (the first two columns), hybrid QNN can predict the correct classes without QMT. However, QMT gains occur predominantly on low-margin samples, where baseline predictions are uncertain due to nearly degenerate logits. Temperature learning increases effective logit separation in these cases, resolving misclassifications without modifying the model architecture as shown in the right sided plots of Fig. \ref{fig:difficulty_protein}.

\begin{figure}[H]

\hspace*{-0.12\linewidth}
\includegraphics[width=1.2\linewidth]
{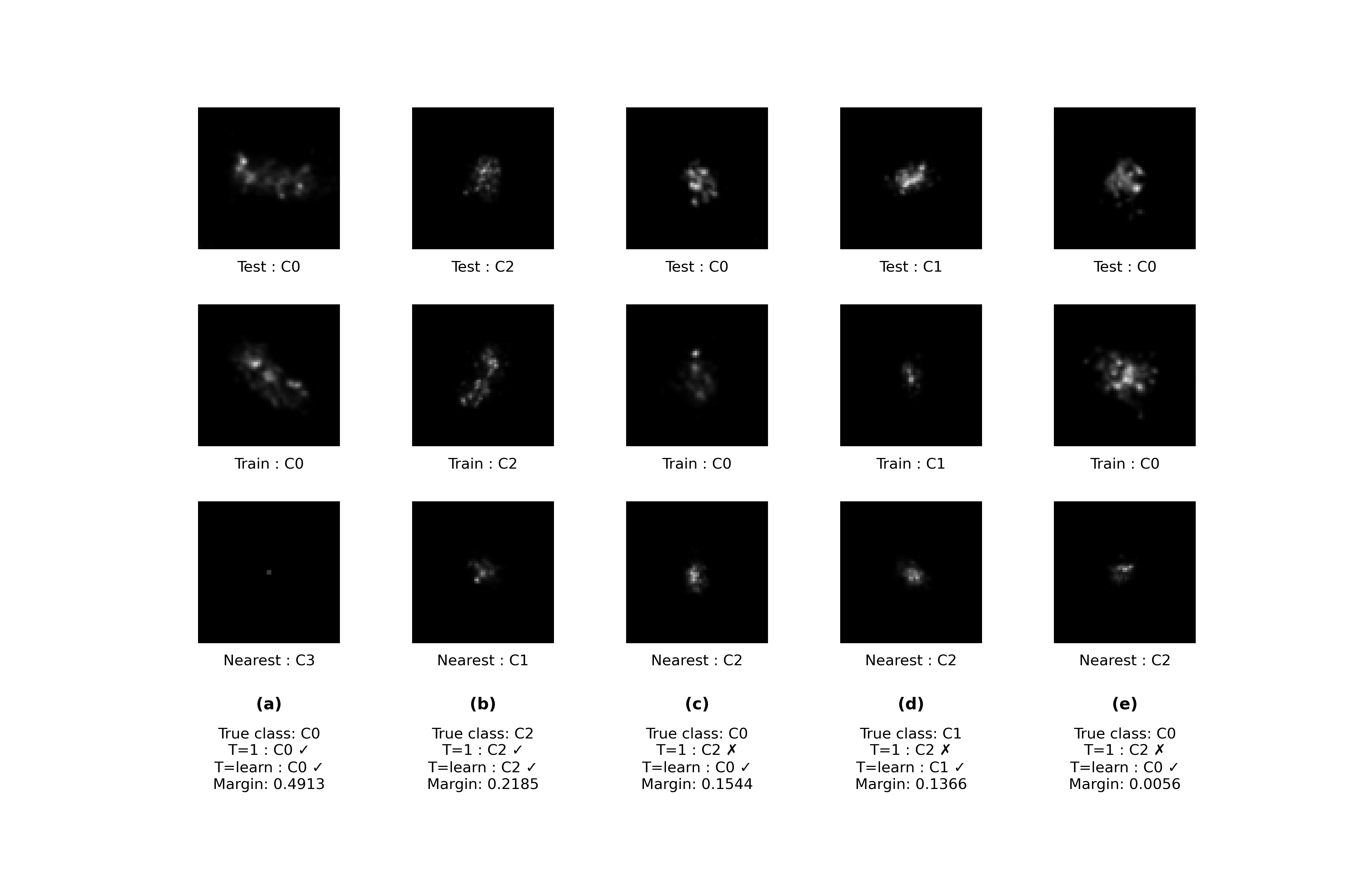}
\caption{Representative hard test samples of proteins at nanometer resolution where QMT changes an incorrect baseline prediction to the correct class. Each column shows a Test sample and two training exemplars: one from the True class (Train) and one from the strongest competing class (Nearest). $T=1$ and 
$T=learn$ refer to the predicted class assignment of the test sample without and with QMT respectively and Margin denotes the confidence gap between the two most likely classes. Learnable QMT enables correct predictions even for low-margin baseline cases.}
\label{fig:difficulty_protein}
\vspace{-4 mm}
\end{figure}

Finally, Table~\ref{table:acc_for_benchmark_datasets} demonstrates that these effects of QMT also generalize to standard benchmarks, including Overhead MNIST and Fashion MNIST, across multiple qubit counts and circuit depths. In particular, configurations with CF$=4$, QB$=8$, and QL$=4$ achieve more than 8.5\% improvement in test accuracy under learned QMT compared to fixed $T=1$, confirming that the benefits of QMT extend beyond the protein classification task.

\begin{table}[H]
\caption{Impact of QMT across benchmark datasets and circuit configurations.}
\label{table:acc_for_benchmark_datasets}
\resizebox{\textwidth}{!}{
\begin{tabular}{@{}ccccccccccc@{}}
\toprule
\multirow{2}{*}{Dataset}        & \multicolumn{3}{c}{Arch Config} & \multicolumn{2}{c}{T = 1}      & \multicolumn{4}{c}{Learned T}                           & \multirow{2}{*}{Params} \\ \cmidrule(lr){2-4} \cmidrule(lr){5-6} \cmidrule(lr){7-10}
                                & CF        & QB       & QL       & Train Acc (\%) & Test Acc (\%) & Train Acc (\%) & Test Acc (\%)         & $T_{\text{mean}}$ & $T_{\text{std}}$ &                         \\ \midrule
\multirow{4}{*}{Overhead MNIST} & 2         & 6        & 4        & 80.06 ± 3.90   & 79.21 ± 3.73  & 91.21 ± 0.34   & \textbf{90.53 ± 0.48} & 0.051  & 0.009 & 325                     \\
                                & 4         & 6        & 4        & 86.16 ± 2.66   & 85.64 ± 2.60  & 93.63 ± 0.24   & \textbf{92.61 ± 0.27} & 0.055  & 0.008 & 691                     \\
                                & 4         & 8        & 4        & 83.86 ± 4.22   & 82.94 ± 4.07  & 93.62 ± 0.16   & \textbf{92.69 ± 0.22} & 0.06   & 0.003 & 829                     \\
                                & 4         & 8        & 8        & 89.76 ± 1.39   & 88.88 ± 1.36  & 93.95 ± 0.30   & \textbf{92.92 ± 0.36} & 0.056  & 0.007 & 861                     \\ \hline
\multirow{4}{*}{Fashion MNIST}  & 2         & 6        & 4        & 84.74 ± 0.70   & 83.88 ± 0.50  & 90.74 ± 0.68   & \textbf{90.18 ± 0.53} & 0.052  & 0.009 & 325                     \\
                                & 4         & 6        & 4        & 84.95 ± 4.48   & 84.32 ± 4.53  & 93.41 ± 0.32   & \textbf{92.53 ± 0.56} & 0.054  & 0.007 & 691                     \\
                                & 4         & 8        & 4        & 84.35 ± 4.41   & 83.80 ± 4.31  & 93.68 ± 0.17   & \textbf{92.64 ± 0.22} & 0.052  & 0.006 & 829                     \\
                                & 4         & 8        & 8        & 88.90 ± 1.54   & 87.80 ± 1.76  & 93.89 ± 0.15   & \textbf{92.80 ± 0.21} & 0.055  & 0.005 & 861                     \\ \hline
\end{tabular}}
\vspace{-4 mm}
\end{table}

We additionally verify that the benefits of QMT are robust to the choice of entanglement structure: replacing CZ with CNOT gates preserves the accuracy and stability gains under learned QMT, while fixed-$T$ hybrid QNNs remain sensitive to the entanglement choice. The corresponding QNN circuit is shown in Supplementary Fig.~B.13. Results in Table~\ref{tab:ggon_entanglements} further show that, without QMT, both entanglement choices show inconsistent convergence and reduced accuracy, whereas learned QMT provides stable and consistently high performance across all tested configurations.
Similar improvements with QMT are observed under variations in both batch size and learning rate as reported in Supplementary Fig.~B.14.

\begin{table}[!htbp]
\caption{Effect of QMT under different entanglement schemes on the protein dataset.}
\label{tab:ggon_entanglements}
\resizebox{\textwidth}{!}{
\begin{tabular}{@{}ccccccccccc@{}}
\toprule
\multirow{2}{*}{Entanglement} & \multicolumn{3}{c}{Arch Config} & \multicolumn{2}{c}{T $=$ 1}      & \multicolumn{4}{c}{Learned T}                           & \multirow{2}{*}{Params} \\ \cmidrule(lr){2-4} \cmidrule(lr){5-6} \cmidrule(lr){7-10}
                              & CF        & QB       & QL       & Train Acc (\%) & Test Acc (\%) & Train Acc (\%) & Test Acc (\%)         & $T_{mean}$ & $T_{std}$ &                         \\ \midrule
\multirow{3}{*}{CNOT}         & 4         & 4        & 6        & 87.51 ± 7.75   & 87.03 ± 7.30  & 95.36 ± 0.65   & \textbf{94.15 ± 0.67} & 0.042  & 0.006 & 617                     \\
                              & 4         & 8        & 6        & 88.54 ± 3.51   & 88.28 ± 2.99  & 95.68 ± 0.72   & \textbf{94.48 ± 0.79} & 0.04   & 0.013 & 709                     \\
                              & 4         & 10       & 8        & 90.96 ± 1.70   & 90.68 ± 1.52  & 95.96 ± 0.86   & \textbf{94.93 ± 0.70} & 0.028  & 0.005 & 775                     \\ \hline
\multirow{3}{*}{CZ}           & 4         & 4        & 6        & 89.16 ± 2.23   & 88.94 ± 2.07  & 95.51 ± 0.23   & \textbf{94.51 ± 0.72} & 0.043  & 0.008 & 617                     \\
                              & 4         & 8        & 6        & 91.49 ± 0.75   & 90.62 ± 0.65  & 95.72 ± 0.44   & \textbf{94.74 ± 0.74} & 0.039  & 0.007 & 709                     \\
                              & 4         & 10       & 8        & 87.96 ± 6.58   & 87.68 ± 6.54  & 96.07 ± 0.41   & \textbf{94.93 ± 0.29} & 0.044  & 0.007 & 775                     \\ \hline
\end{tabular}}
\vspace{-4 mm}
\end{table}

Beyond architectural robustness, we compare the effect of QMT across different optimization strategies. As shown in Fig.~\ref{fig:ggon_optim}, QMT maintains high test accuracy (approximately 93–95\%) across Adam, AdamW, RMSprop, and SGD optimizers. QMT scaling ($T=learn$) not only improves mean test accuracy by ~8–12\% but also reduces performance variability, as shown by a ~5–6× lower standard deviation. This shows that learnable QMT stabilizes training in different optimizers while significantly boosting accuracy.
\begin{figure} [H]
    \centering
    \includegraphics[width=\columnwidth]{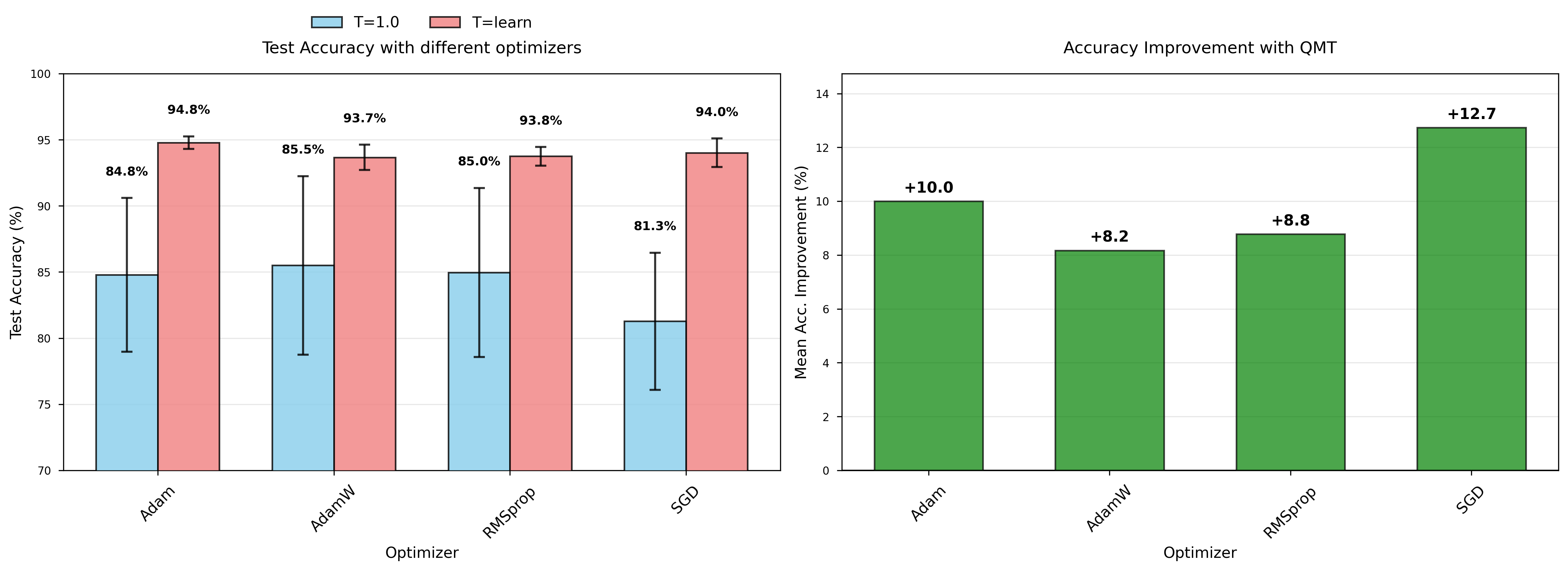}
    \caption{QNN performance comparison for fixed $T=1$ and Learned T while varying optimizers. Mean test accuracy and standard deviation on the protein dataset.}
    \label{fig:ggon_optim}
\end{figure}

To assess how QMT compares with commonly used training strategies for stable training, we evaluate normal training, gradient clipping\cite{pascanu2013difficulty}, and layerwise training \cite{skolik2021layerwise} in both $T=1$ and learned QMT settings. Here, normal training corresponds to simultaneous gradient-based updates of all model parameters. As shown in Fig.~\ref{fig:ggon_stable}, QMT increases mean test accuracy by 9.2\% across normal training, gradient clipping, and layerwise training, with the highest gains (12\%) under gradient clipping. Moreover, QMT reduces the standard deviation by 75\% in normal training. These results indicate that QMT can be applied as a complementary mechanism to improve training reliability in hybrid QNNs.

\begin{figure} [H]
    \centering
    \includegraphics[width=\columnwidth]{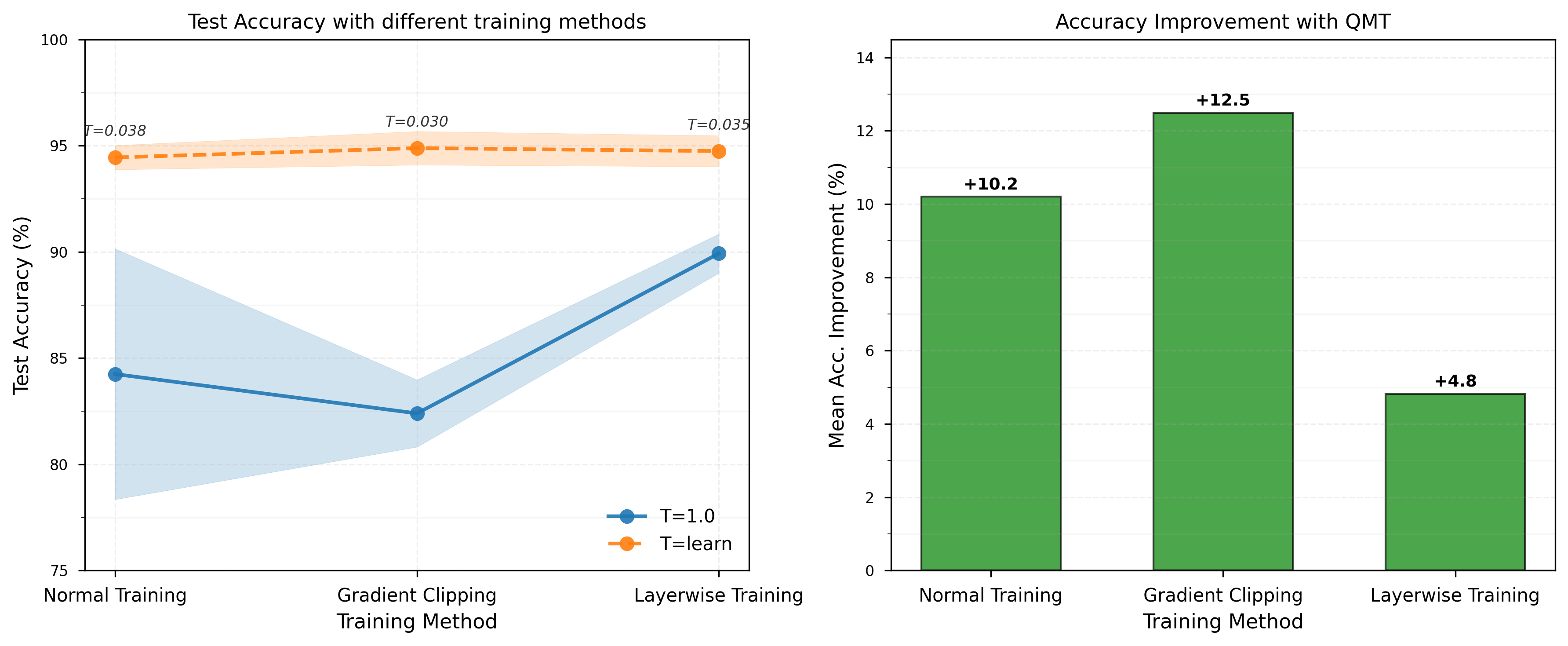}
    \caption{QNN performance comparison for fixed $T=1$ and Learned T with different training stabilization methods. Mean test accuracy and standard deviation on the protein dataset.}
    \label{fig:ggon_stable}
\end{figure}


\label{results}






\section{Conclusions:}
This work identifies and characterizes a structural trainability limitation in hybrid quantum neural network classifiers that arises from the bounded nature of quantum measurement outputs. We show that, when expectation values are used directly as logits, hybrid QNNs can operate in loss regimes where parameter gradient strength is low, which leads to training collapse and instability across trials. 
By introducing Quantum Measurement Temperature (QMT) scaling as an explicit interface between quantum measurements and the loss function, we demonstrate a simple but effective mechanism for restoring parameter gradients, and leading to better optimization. The proposed method does not alter the quantum circuit architecture, depth, or expressivity. Instead, it corrects a mismatch between bounded quantum outputs and loss sensitivity, allowing existing representational capacity to be accessed by gradient based training. 

Beyond classification, the proposed method has immediate implications for other applications using hybrid QNN. Because the limitation originates from bounded quantum outputs, similar optimization issues are expected in regression and other machine learning tasks where loss sensitivity depends on output scale. In such cases, explicit output scaling or loss-aware measurement interfaces can also be important to ensure stable training. Overall, these findings indicate that the progress in hybrid quantum machine learning will depend not only on improved hardware or deeper circuits but also on principled design of the quantum measurement–classical loss interface. Treating measurement outputs as first class elements of the optimization landscape opens new directions for stable and confident training, particularly in real data regimes such as fluorescence microscopy. Addressing this interface explicitly provides a practical path toward deploying hybrid QNNs on realistic, data-intensive problems without increasing quantum resources.

\section*{Declaration of generative AI use}

During the preparation of this manuscript, the authors used an AI-assisted language tool to improve the readability of the text. All scientific content, interpretations, and conclusions were developed by the authors, who reviewed and edited the manuscript and take full responsibility for its content.

\section*{Acknowledgments}
The work was supported by grants from the "Chan Zuckerberg Foundation (iNano)", and "the European Research Council under the European Union’s Horizon 2020 research and innovation program, grant no. 835102". The authors gratefully acknowledge the computing time granted by the Resource Allocation Board and provided on the supercomputer Emmy/Grete at NHR-Nord@Göttingen as part of the NHR infrastructure. The calculations for this research were conducted with computing resources under the project nib00040.

\bibliographystyle{elsarticle-num}
\bibliography{references}




\titleformat{\subsection}
  {\normalfont\bfseries}
  {\thesubsection}{1em}{}
  
\appendix                  
\section*{Supplementary Material}

This supplementary material provides detailed analysis of Quantum Measurement Temperature (QMT), and additional experimental results demonstrating its effects, as referenced in the main manuscript.

\section{Theoretical Proofs}
This section contains the full proofs of the theorems, propositions, and lemmas supporting the analysis of measurement-induced logit compression and QMT scaling presented in the main text.

\subsection{Proof of Theorem 1 (Logit Compression)}
Let $z\in\mathbb{R}^K$ satisfy $\|z\|_\infty\le a$, and
\[
p_i(z) = \frac{e^{z_i}}{\sum_{j=1}^K e^{z_j}}
\]
denote softmax probabilities. For a one-hot label $y_c=1$, the maximum
achievable probability and minimum achievable loss satisfy the following
\begin{align}
p_c(z) &\le \frac{e^a}{e^a + (K-1)e^{-a}}, \\
\mathcal{L}(z,y) &= -\log p_c(z) \ge \log\!\big(1+(K-1)e^{-2a}\big).
\end{align}

\begin{proof}
The softmax probability of the true class $c$ is
\begin{equation}
p_c(z) = \frac{e^{z_c}}{\sum_{j=1}^K e^{z_j}}.
\label{eq:pc-softmax}
\end{equation}
The constraint $\|z\|_\infty \le a$ is equivalent to
\begin{equation}
-a \le z_j \le a \quad \text{for all } j=1,\dots,K.
\label{eq:z-box}
\end{equation}
We first determine the $z$ that maximizes $p_c(z)$ under \eqref{eq:z-box}.
Since the numerator $e^{z_c}$ is increasing in $z_c$ and the denominator
$\sum_j e^{z_j}$ is decreasing in all $z_j$ except $z_c$, the maximum separation of classes is possible when:
\begin{equation}
z_c = a, \qquad z_j = -a \;\;\text{for all } j\neq c.
\label{eq:z-opt}
\end{equation}
Substituting \eqref{eq:z-opt} into \eqref{eq:pc-softmax} gives
\begin{align}
p_c^{\max}(a)
&= \frac{e^a}{e^a + \sum_{j\neq c} e^{-a}} \\
&= \frac{e^a}{e^a + (K-1)e^{-a}}.
\end{align}
This proves the confidence bound of the classifier when quantum measurement output is bounded by $[-a, a]$.

For the loss, we use the fact that the cross entropy for one hot labels is
$\mathcal{L}(z,y) = -\log p_c(z)$. Since $-\log x$ is decreasing in $x$
on $(0,1]$, maximizing $p_c(z)$ minimizes the loss. Hence
\begin{equation}
\mathcal{L}(z,y) \ge -\log p_c^{\max}(a).
\end{equation}
We simplify $-\log p_c^{\max}(a)$:
\begin{align}
-\log p_c^{\max}(a)
&= -\log \left( \frac{e^a}{e^a + (K-1)e^{-a}} \right) \\
&= -\log(e^a) + \log\!\big(e^a + (K-1)e^{-a}\big) \\
&= -a + \log\!\big(e^a + (K-1)e^{-a}\big).
\end{align}
Also,
\begin{align}
\log\!\big(e^a + (K-1)e^{-a}\big)
&= \log\!\big(e^a(1 + (K-1)e^{-2a})\big) \\
&= a + \log\!\big(1 + (K-1)e^{-2a}\big).
\end{align}
Thus,
\begin{align}
-\log p_c^{\max}(a)
&= -a + a + \log\!\big(1 + (K-1)e^{-2a}\big) \\
&= \log\!\big(1 + (K-1)e^{-2a}\big),
\end{align}
which establishes the loss lower bound.
\end{proof}

\subsection{Proof of Proposition 1 (Temperature Decompression)}

\begin{theorem*}
Let $z' = z/T$ with $\|z\|_\infty \le a$, and let softmax
probabilities be $p^{(T)}_i = \mathrm{softmax}(z')_i$.
Then
\begin{align}
p_c^{\max}(a,T)
&= \frac{e^{a/T}}{e^{a/T}+(K-1)e^{-a/T}}, \\
\mathcal{L}_{\min}(a,T)
&= \log\!\big(1+(K-1)e^{-2a/T}\big),
\end{align}
and as $T\downarrow 0$, $p_c^{\max}(a,T)\uparrow 1$ and
$\mathcal{L}_{\min}(a,T)\downarrow 0$.
\end{theorem*}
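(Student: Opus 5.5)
The plan is to reduce the statement to Theorem~\ref{thm:compression_main} by a change of scale. If $\|z\|_\infty\le a$ and $T>0$, then $z'=z/T$ satisfies $\|z'\|_\infty\le a/T$. Moreover, as $z$ ranges over the box $[-a,a]^K$, $z'$ ranges over the full box $[-a/T,a/T]^K$, since $z\mapsto z/T$ is a bijection between the two boxes. Maximizing $p^{(T)}_c(z)=\mathrm{softmax}(z/T)_c$ over $z$ is therefore the same as maximizing $\mathrm{softmax}(z')_c$ over the scaled box. Theorem~\ref{thm:compression_main}, applied with $a$ replaced by $a/T$, gives the upper bound
\[
\frac{e^{a/T}}{e^{a/T}+(K-1)e^{-a/T}}.
\]
It also gives the corresponding lower bound $\log\big(1+(K-1)e^{-2a/T}\big)$ on the loss $-\log p_c^{(T)}$.

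Because the proposition states these quantities as a maximum and a minimum rather than as bounds, I will also check that they are attained. Take $z_c=a$ and $z_j=-a$ for $j\neq c$. This point lies in the admissible box, and substituting it gives exactly the displayed value of $p_c^{\max}(a,T)$. The loss minimum is then $-\log p_c^{\max}(a,T)$, which simplifies by the same algebra as in the proof of Theorem~\ref{thm:compression_main}: factor $e^{a/T}$ out of the denominator. This yields $\mathcal{L}_{\min}(a,T)$.

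For the limits, I will rewrite the probability as
\[
p_c^{\max}(a,T)=\frac{1}{1+(K-1)e^{-2a/T}}.
\]
For $a>0$, the quantity $e^{-2a/T}$ decreases strictly to $0$ as $T\downarrow 0$, because $2a/T$ increases to $+\infty$. Hence $p_c^{\max}\uparrow 1$. Likewise $\mathcal{L}_{\min}=\log\big(1+(K-1)e^{-2a/T}\big)$ decreases to $\log 1=0$, by continuity and monotonicity of $\log$.

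The only genuine subtlety is the degenerate case. The monotone limits need $a>0$ and $K\ge 2$. If $a=0$ or $K=1$, the expressions are constant, equal to $1/K$ and $\log K$, or to $1$ and $0$ respectively, so the limit statement holds only when $a>0$ or $K=1$. I would state this assumption explicitly. Apart from that, the argument is a direct rescaling of Theorem~\ref{thm:compression_main}.
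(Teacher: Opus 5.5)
Your proposal is correct and matches the paper's proof: rescale to $\|z'\|_\infty\le a/T$, apply Theorem~\ref{thm:compression_main} with $a/T$ in place of $a$, and get the limits from $e^{-2a/T}\to 0$ (the paper sets $u=a/T$ and writes $p_c^{\max}=e^{2u}/(e^{2u}+K-1)$). Two of your additions are valid refinements that the paper leaves implicit: the explicit attainment check at $z_c=a$, $z_j=-a$, and the observation that the limit claim needs $a>0$ (for $a=0$ and $K\ge 2$ the values stay at $1/K$ and $\log K$).
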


\begin{proof}
The constraint $\|z\|_\infty \le a$ implies
\begin{equation}
-a \le z_j \le a \quad \Rightarrow \quad -\frac{a}{T}\le z'_j \le \frac{a}{T}.
\end{equation}
Hence the scaled logits $z'$ satisfy
$\|z'\|_\infty \le a/T$. Applying Theorem 1
with $a$ replaced by $a/T$ immediately yields
\begin{align}
p_c^{\max}(a,T) &= \frac{e^{a/T}}{e^{a/T}+(K-1)e^{-a/T}}, \\
\mathcal{L}_{\min}(a,T) &= \log\!\big(1 + (K-1)e^{-2a/T} \big).
\end{align}

To establish the limiting behavior, we have to define $u = a/T$. As $T\downarrow 0$,
we have $u \to \infty$. For the confidence,
\begin{equation}
p_c^{\max}(a,T) = \frac{e^{u}}{e^{u} + (K-1)e^{-u}}
= \frac{e^{2u}}{e^{2u} + (K-1)}.
\end{equation}
As $u\to\infty$, the denominator is dominated by $e^{2u}$, so
$p_c^{\max}(a,T)\to 1$.

For the loss,
\begin{equation}
\mathcal{L}_{\min}(a,T) = \log\!\big(1 + (K-1)e^{-2u}\big).
\end{equation}
Since $e^{-2u}\to 0$ as $u\to\infty$, the argument of the log tends to
$1$, and thus the minimum loss value  $\mathcal{L}_{\min}(a,T)\to 0$.
\end{proof}

\subsection{Proof of Lemma 1 (Logit Gradient Scaling)}

\begin{theorem*}
Let $\mathcal{L}^{(T)}(z,y) = \mathcal{L}(z/T,y)$ be the cross-entropy
loss on temperature scaled logits. Then
\begin{equation}
\frac{\partial \mathcal{L}^{(T)}}{\partial z_i}
= \frac{1}{T}\big(p_i^{(T)} - y_i\big),
\end{equation}
where $p^{(T)} = \mathrm{softmax}(z/T)$.
\end{theorem*}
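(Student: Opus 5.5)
The plan is a direct chain-rule computation that reduces to the standard softmax--cross-entropy gradient. Write $u = z/T$, so that $\mathcal{L}^{(T)}(z,y) = \mathcal{L}(u,y)$ with $\mathcal{L}(u,y) = -\sum_k y_k \log \mathrm{softmax}(u)_k$, and $p^{(T)} = \mathrm{softmax}(u)$. Since $u_j = z_j/T$, the Jacobian $\partial u_j/\partial z_i = \delta_{ij}/T$ is diagonal. The chain rule then collapses to
\[
\frac{\partial \mathcal{L}^{(T)}}{\partial z_i} = \sum_j \frac{\partial \mathcal{L}}{\partial u_j}\frac{\partial u_j}{\partial z_i} = \frac{1}{T}\,\frac{\partial \mathcal{L}}{\partial u_i}.
\]
So the whole statement reduces to the unscaled identity $\partial \mathcal{L}/\partial u_i = p_i^{(T)} - y_i$.

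To establish that identity, I will expand $\log \mathrm{softmax}(u)_k = u_k - \log\sum_j e^{u_j}$, which gives
\[
\mathcal{L}(u,y) = -\sum_k y_k u_k + \Big(\sum_k y_k\Big)\log\sum_j e^{u_j}.
\]
Next I will use $\sum_k y_k = 1$ (one-hot, or more generally any probability vector) to drop the prefactor on the log-sum-exp term. Differentiating term by term, the first term contributes $-y_i$. For the second term, $\partial_{u_i}\log\sum_j e^{u_j} = e^{u_i}/\sum_j e^{u_j} = p_i^{(T)}$. Combining these with the factor $1/T$ from the first step yields the claim.

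There is no real obstacle here. The only point needing care is the normalization $\sum_k y_k = 1$: without it, the coefficient of $p_i^{(T)}$ would be $\sum_k y_k$ rather than $1$. I will state this explicitly, and note that it holds for the one-hot labels used throughout. It is also worth remarking that $T$ is treated as fixed when differentiating with respect to $z$, even though it is learnable, because the derivative here is partial in $z$ only.
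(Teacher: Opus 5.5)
Your proof is correct and follows the same route as the paper: substitute $u = z/T$, use the diagonal Jacobian $\partial u_j/\partial z_i = \delta_{ij}/T$ to reduce to the unscaled softmax--cross-entropy gradient, and conclude. The only difference is that you derive $\partial \mathcal{L}/\partial u_i = p_i^{(T)} - y_i$ explicitly via the log-sum-exp expansion, which makes the required normalization $\sum_k y_k = 1$ visible, whereas the paper simply quotes this standard identity.
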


\begin{proof}
Define $z' = z/T$. By the chain rule,
\begin{equation}
\frac{\partial \mathcal{L}^{(T)}}{\partial z_i}
= \sum_{j=1}^K
\frac{\partial \mathcal{L}(z',y)}{\partial z'_j}
\frac{\partial z'_j}{\partial z_i}.
\end{equation}
We have $z'_j = z_j/T$, hence
$\frac{\partial z'_j}{\partial z_i} = \delta_{ij}/T$, where
$\delta_{ij}$ is the Kronecker delta. Therefore
\begin{align}
\frac{\partial \mathcal{L}^{(T)}}{\partial z_i}
&= \sum_{j=1}^K \frac{\partial \mathcal{L}}{\partial z'_j}
\frac{\delta_{ij}}{T} \\
&= \frac{1}{T} \frac{\partial \mathcal{L}}{\partial z'_i}.
\end{align}
For cross-entropy with softmax,
\begin{equation}
\frac{\partial \mathcal{L}}{\partial z'_i}
= p_i^{(T)} - y_i,
\end{equation}
where $p_i^{(T)} = \exp(z'_i)/\sum_j \exp(z'_j)$. Thus,
\begin{equation}
\frac{\partial \mathcal{L}^{(T)}}{\partial z_i}
= \frac{1}{T}(p_i^{(T)} - y_i).
\end{equation}
\end{proof}

\subsection{Proof of Proposition 2 (Parameter Gradient Variance Scaling)}

\begin{theorem*}
Let $J(\theta)=\partial z/\partial \theta$ be the Jacobian of logits
with respect to parameters. For QMT scaled logits,
\begin{equation}
\nabla_\theta \mathcal{L}^{(T)}
= \frac{1}{T} J(\theta)^\top (p^{(T)} - y),
\end{equation}
and
\begin{equation}
\mathrm{Var}\!\left[\nabla_\theta \mathcal{L}^{(T)}\right]
= \frac{1}{T^2}
\,\mathrm{Var}\!\left[J(\theta)^\top (p^{(T)} - y)\right],
\end{equation}
where the variance is taken over input data samples.
\end{theorem*}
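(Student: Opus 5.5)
The proof has two steps: first compute the gradient with the chain rule, then get the variance from homogeneity of the variance operator.

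\emph{Step 1 (gradient).} The loss depends on $\theta$ only through the logits $z(x,\theta)$. So for each parameter $\theta_k$,
\[
\frac{\partial \mathcal{L}^{(T)}}{\partial \theta_k}
= \sum_{i=1}^K \frac{\partial \mathcal{L}^{(T)}}{\partial z_i}\,\frac{\partial z_i}{\partial \theta_k}.
\]
Lemma~\ref{thm:loss_main} gives $\partial\mathcal{L}^{(T)}/\partial z_i = (p_i^{(T)}-y_i)/T$. Substituting, and writing $J_{ik}=\partial z_i/\partial\theta_k$, the $k$-th component becomes $\frac{1}{T}\sum_i J_{ik}(p_i^{(T)}-y_i)$. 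Collected over all $k$, this is the vector identity $\nabla_\theta\mathcal{L}^{(T)} = \frac{1}{T}J(\theta)^\top(p^{(T)}-y)$. No regularity beyond differentiability of $\theta\mapsto z$ is needed. For Pauli rotations this holds automatically, since $z$ is a trigonometric polynomial in $\theta$.

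\emph{Step 2 (variance).} Fix $\theta$ and $T$, and regard $x$ (with its label $y$) as drawn from the data distribution. Define the random vector $g(x)=J(\theta;x)^\top(p^{(T)}(x)-y)$. By Step 1, $\nabla_\theta\mathcal{L}^{(T)} = g/T$, where $1/T$ is a deterministic scalar independent of the sample. For any constant $c$, $\mathrm{Var}[cg]=c^2\,\mathrm{Var}[g]$, whether this is read componentwise or as the covariance matrix $\mathbb{E}[(cg-\mathbb{E}cg)(cg-\mathbb{E}cg)^\top]$. Taking $c=1/T$ gives the $1/T^2$ factor.

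\emph{Main obstacle.} The only delicate point is interpretive rather than computational. The vector $g$ itself depends on $T$ through $p^{(T)}$. The identity is therefore exact as stated, but it does \emph{not} imply that the total variance grows monotonically like $1/T^2$ relative to the $T=1$ model. I would state this explicitly in a remark: the factor $1/T^2$ is the explicit prefactor, and the variance of $J^\top(p^{(T)}-y)$ is a separate, $T$-dependent term. I would also note that $T$ must be treated as fixed with respect to the randomness over samples, which holds because $T$ is a single learned scalar shared across inputs.
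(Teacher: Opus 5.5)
Your proof is correct and follows the paper's argument exactly: the chain rule through the logits using Lemma~1, then $\mathrm{Var}(cX)=c^2\mathrm{Var}(X)$ with $c=1/T$. Your closing remark is a valid caveat that the paper's proof does not make: since $J^\top(p^{(T)}-y)$ itself depends on $T$, the identity alone does not establish a $1/T^2$ variance improvement over the $T=1$ model.
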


\begin{proof}
From Lemma 1, the gradient with respect to
logits is
\begin{equation}
\nabla_z \mathcal{L}^{(T)} = \frac{1}{T}(p^{(T)} - y).
\end{equation}
Applying the chain rule yields
\begin{equation}
\nabla_\theta \mathcal{L}^{(T)}
= J(\theta)^\top \nabla_z \mathcal{L}^{(T)}
= \frac{1}{T} J(\theta)^\top (p^{(T)} - y).
\end{equation}
For any
scalar $c$, $\mathrm{Var}(cX) = c^2 \mathrm{Var}(X)$, so
\begin{equation}
\mathrm{Var}\!\left[\nabla_\theta \mathcal{L}^{(T)}\right]
= \mathrm{Var}\!\left[\frac{1}{T} J(\theta)^\top (p^{(T)} - y)\right]
= \frac{1}{T^2}\,\mathrm{Var}\!\left[J(\theta)^\top (p^{(T)} - y)\right].
\end{equation}
\end{proof}

\subsection{Proof of Lemma 2 (Lipschitz Scaling)}

\begin{theorem*}
Assume $\mathcal{L}(z,y)$ is $L$-Lipschitz in logits $z$, i.e.
\begin{equation}
|\mathcal{L}(z,y) - \mathcal{L}(\tilde{z},y)|
\le L \|z - \tilde{z}\|
\quad
\forall z,\tilde{z}\in\mathbb{R}^K.
\end{equation}
Then the temperature-scaled loss
$\mathcal{L}^{(T)}(z,y)=\mathcal{L}(z/T,y)$ satisfies
\begin{equation}
|\mathcal{L}^{(T)}(z,y) - \mathcal{L}^{(T)}(\tilde{z},y)|
\le \frac{L}{T} \|z - \tilde{z}\|.
\end{equation}
\end{theorem*}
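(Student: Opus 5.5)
The plan is to apply the assumed Lipschitz property of $\mathcal{L}$ directly to the rescaled logits and then use the homogeneity of the norm. No structure of the cross-entropy or the softmax is needed beyond the hypothesis. The only standing requirement, $T>0$, is already built into the definition of QMT.

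\textbf{Step 1: unfold the definition.} Fix $z,\tilde z\in\mathbb{R}^K$ and write $z'=z/T$ and $\tilde z'=\tilde z/T$. Both are again vectors in $\mathbb{R}^K$, so the Lipschitz hypothesis applies to them. By definition of the temperature-scaled loss,
\[
\mathcal{L}^{(T)}(z,y)-\mathcal{L}^{(T)}(\tilde z,y)=\mathcal{L}(z',y)-\mathcal{L}(\tilde z',y).
\]

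\textbf{Step 2: invoke the hypothesis.} Since $\mathcal{L}$ is $L$-Lipschitz in its logit argument for every pair of points, we get
\[
|\mathcal{L}(z',y)-\mathcal{L}(\tilde z',y)|\le L\,\|z'-\tilde z'\|.
\]

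\textbf{Step 3: use homogeneity of the norm.} We have $z'-\tilde z'=(z-\tilde z)/T$, and absolute homogeneity gives $\|(z-\tilde z)/T\|=|1/T|\,\|z-\tilde z\|$. Because $T>0$, this equals $\|z-\tilde z\|/T$. Substituting into Step 2 yields the claimed bound with constant $L/T$.

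There is no real obstacle here; the argument is a composition of a Lipschitz map with the linear map $z\mapsto z/T$, whose operator norm is $1/T$. The only points to state explicitly are the following.
\begin{itemize}
\item Positivity of $T$ is needed both for $z/T$ to be defined and for $|1/T|$ to equal $1/T$.
\item The same norm must be used on both sides of the inequality.
\end{itemize}
One could add a remark that the constant $L/T$ is attained whenever $L$ is the best Lipschitz constant of $\mathcal{L}$, since the scaling map is a bijection. This would show that the $1/T$ factor is sharp rather than merely an upper bound.
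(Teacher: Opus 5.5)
Your proof is correct and is essentially the paper's own argument: substitute $u=z/T$, $\tilde u=\tilde z/T$, apply the Lipschitz hypothesis, and pull out $1/T$ by homogeneity of the norm, using $T>0$. Your sharpness remark is also right in the sense that the best Lipschitz constant of $\mathcal{L}^{(T)}$ is exactly $L/T$, though no particular pair $z,\tilde z$ need attain that supremum.
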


\begin{proof}
Let $u=z/T$ and $\tilde{u}=\tilde{z}/T$. Then
\begin{align}
|\mathcal{L}^{(T)}(z,y) - \mathcal{L}^{(T)}(\tilde{z},y)|
&= |\mathcal{L}(u,y) - \mathcal{L}(\tilde{u},y)| \\
&\le L \|u - u'\| \\
&= L \left\|\frac{z}{T} - \frac{\tilde{z}}{T}\right\| \\
&= \frac{L}{T} \|z - \tilde{z}\|.
\end{align}
\end{proof}

\subsection{Proof of Theorem 2 (Uniform Parameter Gradient Bound)}

We restate the theorem in a more general form here.

\begin{theorem*}
\label{thm:param-supp}
Let $U(\theta)$ be a variational circuit whose trainable gates are of the
form $e^{-i\theta_k G_k}$ with Hermitian generators $G_k$ of finite
operator norm $\|G_k\|<\infty$. Let $O_i$ be bounded observables with
finite operator norm $\|O_i\|<\infty$, and define logits
\[
z_i(\theta) = \langle \psi_0 | U(\theta)^\dagger O_i U(\theta) | \psi_0 \rangle.
\]
Let $\mathcal{L}^{(T)}(z,y)$ denote the QMT scaled cross entropy
loss. Then there exists a finite constant $C>0$, depending only on
$\{\|G_k\|\}$ and $\{\|O_i\|\}$, such that for all parameters $\theta_k$,
\begin{equation}
\Big|
\frac{\partial \mathcal{L}^{(T)}}{\partial \theta_k}
\Big|
\le \frac{C}{T}.
\end{equation}
In particular, if $G_k$ and $O_i$ are (tensor products of) Pauli operators
with eigenvalues in $\{\pm 1\}$, one can choose $C = K$, where $K$ is the
number of logits.
\end{theorem*}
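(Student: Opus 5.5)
The plan is to split the gradient by the chain rule into a logit-level factor and a circuit-level factor, bound each separately, and then combine them.

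\textbf{Logit-level factor.} By Lemma~\ref{thm:loss_main} and Proposition~\ref{thm:variance_main},
\[
\frac{\partial \mathcal{L}^{(T)}}{\partial \theta_k}=\frac{1}{T}\sum_{i=1}^K \big(p_i^{(T)}-y_i\big)\frac{\partial z_i}{\partial \theta_k}.
\]
Since $p^{(T)}$ lies in the probability simplex and $y$ is one-hot, each coefficient satisfies $|p_i^{(T)}-y_i|\le 1$. Sharper, $\sum_i |p_i^{(T)}-y_i| = 2(1-p_c^{(T)})\le 2$. Either bound is uniform in $T$ and in the data. The whole problem therefore reduces to a uniform bound on the Jacobian entries $|\partial z_i/\partial\theta_k|$.

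\textbf{Jacobian for general generators.} Write $U(\theta)=V_2\,e^{-i\theta_k G_k}\,V_1$, where $V_1,V_2$ do not depend on $\theta_k$. Set $|\phi\rangle = V_1|\psi_0\rangle$ and $\tilde O_i = V_2^\dagger O_i V_2$. Differentiating gives
\[
\partial_{\theta_k} z_i = i\langle \phi_k|[G_k,\tilde O_i]|\phi_k\rangle,
\]
where $|\phi_k\rangle=e^{-i\theta_k G_k}|\phi\rangle$ is normalized. Then
\[
|\partial_{\theta_k} z_i|\le \|[G_k,\tilde O_i]\|\le 2\|G_k\|\,\|O_i\|,
\]
using unitary invariance of the operator norm. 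If the same parameter appears in several gates (for example through data re-uploading with shared weights), the product rule gives a sum of such terms, so the bound is multiplied by the number of occurrences. I would assume each $\theta_k$ appears once, or fold the multiplicity into $C$. Combining with the first step yields
\[
\Big|\frac{\partial\mathcal{L}^{(T)}}{\partial\theta_k}\Big|\le \frac{C}{T},\qquad C = 2K\max_k\|G_k\|\max_i\|O_i\|.
\]
The sharper simplex bound improves this to $C=4\max_k\|G_k\|\max_i\|O_i\|$. This constant depends only on the operator norms, as the statement requires.

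\textbf{Pauli case with $C=K$.} With the convention $R(\theta)=e^{-i\theta P/2}$ the effective generator is $P/2$, and the parameter-shift rule gives
\[
\partial_{\theta}z_i=\tfrac12\big[z_i(\theta+\tfrac{\pi}{2})-z_i(\theta-\tfrac{\pi}{2})\big].
\]
Both shifted expectations lie in $[-1,1]$ because $\|O_i\|=1$, so $|\partial z_i/\partial\theta_k|\le 1$. Summing $K$ terms, each at most $1\cdot 1$, gives $K/T$.

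\textbf{Main obstacle.} The only delicate point is the generator normalization. If the gate is literally $e^{-i\theta P}$ with eigenvalues $\pm1$, the general bound gives $|\partial z_i/\partial\theta_k|\le 2$, and one must use the sharper simplex estimate $\sum_i|p_i^{(T)}-y_i|\le 2$ to recover a $K$-type constant. That estimate yields $4$, which is at most $K$ whenever $K\ge4$. I would handle this by stating the half-angle convention explicitly, matching standard $R_X$ and $R_Y$ gates, and noting the alternative constant otherwise.
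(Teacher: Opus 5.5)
Your proof is correct, and its core is the paper's own argument. You use the chain rule through Lemma~1, the commutator formula $\partial_{\theta_k} z_i = i\langle\phi_k|[G_k,\tilde O_i]|\phi_k\rangle$, unitary invariance of the operator norm with $\|[A,B]\|\le 2\|A\|\|B\|$, and $|p_i^{(T)}-y_i|\le 1$. This gives the paper's $C = 2\max_k\|G_k\|\sum_i\|O_i\|$; your $2K\max_k\|G_k\|\max_i\|O_i\|$ is the same bound with the sum replaced by $K$ times the max. Your parameter-shift argument for $|\partial z_i/\partial\theta_k|\le 1$ is exactly the main-text sketch.

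You go beyond the paper in two ways. First, the identity $\sum_i|p_i^{(T)}-y_i| = 2(1-p_c^{(T)})$ yields the $K$-independent constant $4\max_k\|G_k\|\max_i\|O_i\|$, which the paper does not obtain. Your multiplicity caveat for shared parameters is also something the paper leaves implicit.

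Second, you make explicit the generator normalization that the supplementary proof glosses over when it passes from $C\le 2K$ to $C=K$ ``using the parameter-shift rule''. That step needs the half-angle gates $e^{-i\theta P/2}$, the $R_X$, $R_Y$ of the main text. Your caveat is not cosmetic: with the literal gates $e^{-i\theta_k G_k}$ and $G_k$ Pauli, the clause $C=K$ actually fails for $K=2$. Take $|\psi_0\rangle = |01\rangle$, the single gate $e^{-i\theta\, Y\otimes I}$, $O_1 = Z\otimes I$, $O_2 = Z\otimes Z$ and label $c=1$. Then $z_1 = -z_2 = \cos 2\theta$ and
\[
\frac{\partial \mathcal{L}^{(T)}}{\partial\theta} = \frac{4\sin 2\theta}{T\,\big(1+e^{2\cos 2\theta/T}\big)}.
\]
At $T=1$, $2\theta = 2\pi/3$ this is about $2.53 > K/T$. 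Moreover $T\,\partial_\theta\mathcal{L}^{(T)}\to 4\sin 2\theta$ as $T\to 0$ for any $2\theta\in(\pi/2,\pi)$. So stating the half-angle convention, or falling back to your constant $4$ when $K<4$, is the right fix.
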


\begin{proof}
For a logit index $i$ and a parameter $\theta_k$. We know
\[
z_i(\theta)
= \langle \psi_0 | U(\theta)^\dagger O_i U(\theta) | \psi_0 \rangle.
\]
If we isolate the quantum layer containing $k$-th parameterized gate from rest of the circuit, 
\[
U(\theta) = V(\theta_{\neq k})\, e^{-i\theta_k G_k}\, W(\theta_{\neq k}),
\]
where $V$ and $W$ collect all gates not depending on $\theta_k$. Then
\[
z_i(\theta)
= \langle \phi | e^{i\theta_k G_k} \tilde{O}_i e^{-i\theta_k G_k} | \phi \rangle,
\]
where $|\phi\rangle = W(\theta_{\neq k})|\psi_0\rangle$ and
$\tilde{O}_i = V(\theta_{\neq k})^\dagger O_i V(\theta_{\neq k})$.

Differentiating with respect to $\theta_k$ gives
\begin{align}
\frac{\partial z_i}{\partial \theta_k}
&= \frac{\partial}{\partial \theta_k}
\langle \phi | e^{i\theta_k G_k} \tilde{O}_i e^{-i\theta_k G_k} | \phi \rangle \\
&= i\, \langle \phi | e^{i\theta_k G_k}
\big[ G_k, \tilde{O}_i \big]
e^{-i\theta_k G_k} | \phi \rangle,
\end{align}
due to the standard identity
$\frac{\partial}{\partial \theta}
(e^{i\theta G} \tilde{O} e^{-i\theta G})
= i e^{i\theta G}[G,\tilde{O}]e^{-i\theta G}$.

Using the Cauchy–Schwarz inequality in operator
norm, we get
\begin{align}
\bigg|\frac{\partial z_i}{\partial \theta_k}\bigg|
&= \big| \langle \phi | e^{i\theta_k G_k}
\big[ G_k, \tilde{O}_i \big]
e^{-i\theta_k G_k} | \phi \rangle \big| \\
&\le \big\| e^{i\theta_k G_k} [G_k, \tilde{O}_i] e^{-i\theta_k G_k} \big\| \\
&= \big\| [G_k, \tilde{O}_i] \big\|,
\end{align}
as unitary conjugation preserves the operator norm.

The commutator satisfies the standard bound
\[
\|[A,B]\| \le 2\|A\|\|B\|,
\]
which leads to
\begin{equation}
\bigg|\frac{\partial z_i}{\partial \theta_k}\bigg|
\le 2 \|G_k\| \,\|\tilde{O}_i\|
= 2 \|G_k\| \,\|O_i\|.
\end{equation}

From Lemma~1, we know
\[
\frac{\partial \mathcal{L}^{(T)}}{\partial z_i}
= \frac{1}{T} \big(p_i^{(T)} - y_i\big).
\]
So,
\begin{align}
\frac{\partial \mathcal{L}^{(T)}}{\partial \theta_k}
&= \sum_{i=1}^K
\frac{\partial \mathcal{L}^{(T)}}{\partial z_i}
\frac{\partial z_i}{\partial \theta_k} \\
&= \frac{1}{T} \sum_{i=1}^K
\big(p_i^{(T)} - y_i\big)
\frac{\partial z_i}{\partial \theta_k}.
\end{align}
Taking absolute values and using the bounds $|p_i^{(T)} - y_i| \le 1$ for all $i$,
\begin{align}
\bigg|
\frac{\partial \mathcal{L}^{(T)}}{\partial \theta_k}
\bigg|
&\le \frac{1}{T} \sum_{i=1}^K
\big|p_i^{(T)} - y_i\big|
\bigg|\frac{\partial z_i}{\partial \theta_k}\bigg| \\
&\le \frac{1}{T} \sum_{i=1}^K
1 \cdot 2\|G_k\|\|O_i\| \\
&= \frac{2\|G_k\|}{T} \sum_{i=1}^K \|O_i\|.
\end{align}

Defining
\[
C := 2 \max_k \|G_k\| \sum_{i=1}^K \|O_i\|,
\]
we obtain the desired uniform bound
\[
\bigg|
\frac{\partial \mathcal{L}^{(T)}}{\partial \theta_k}
\bigg|
\le \frac{C}{T}
\quad\text{for all } k.
\]

In the special case where both $G_k$ and $O_i$ are tensor products of Pauli
operators with eigenvalues $\pm 1$, we have $\|G_k\| = \|O_i\| = 1$ and
hence $C \le 2K$. Using the parameter-shift rule gives a slightly sharper
constant $C=K$, as stated in the main text.
\end{proof}
The above theorem shows that quantum parameter
gradients are bounded by a temperature dependent term. These proofs highlight the role of QMT scaling in amplifying both gradient magnitude and variance in QNNs.

\section{Supplementary Results}

\subsection{Logit Range and Measurement Output Distributions}

\begin{figure}[H]
    \centering
    \begin{subfigure}[b]{1\textwidth}
        \includegraphics[width=\textwidth]{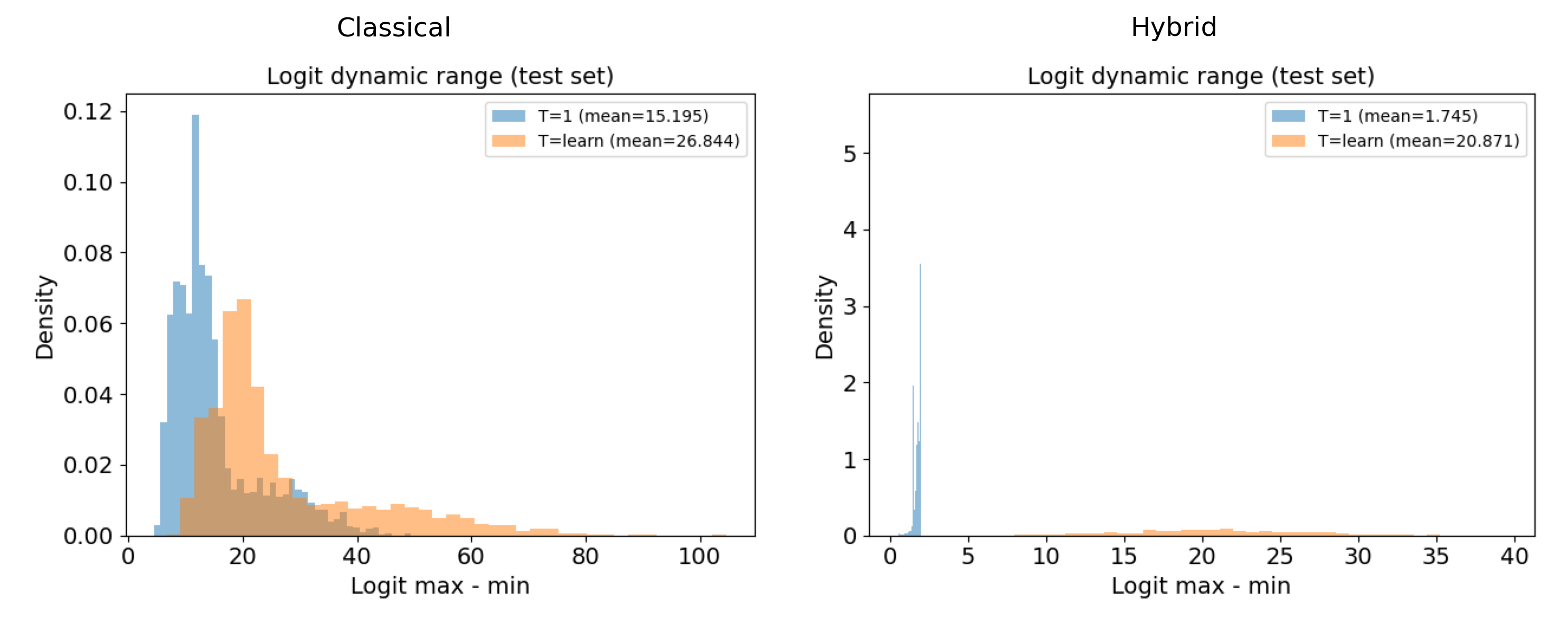}
        \caption{Protein Dataset}
        \label{fig:logits_range_ggon}
    \end{subfigure}
    \begin{subfigure}[b]{1\textwidth}
        \includegraphics[width=\textwidth]{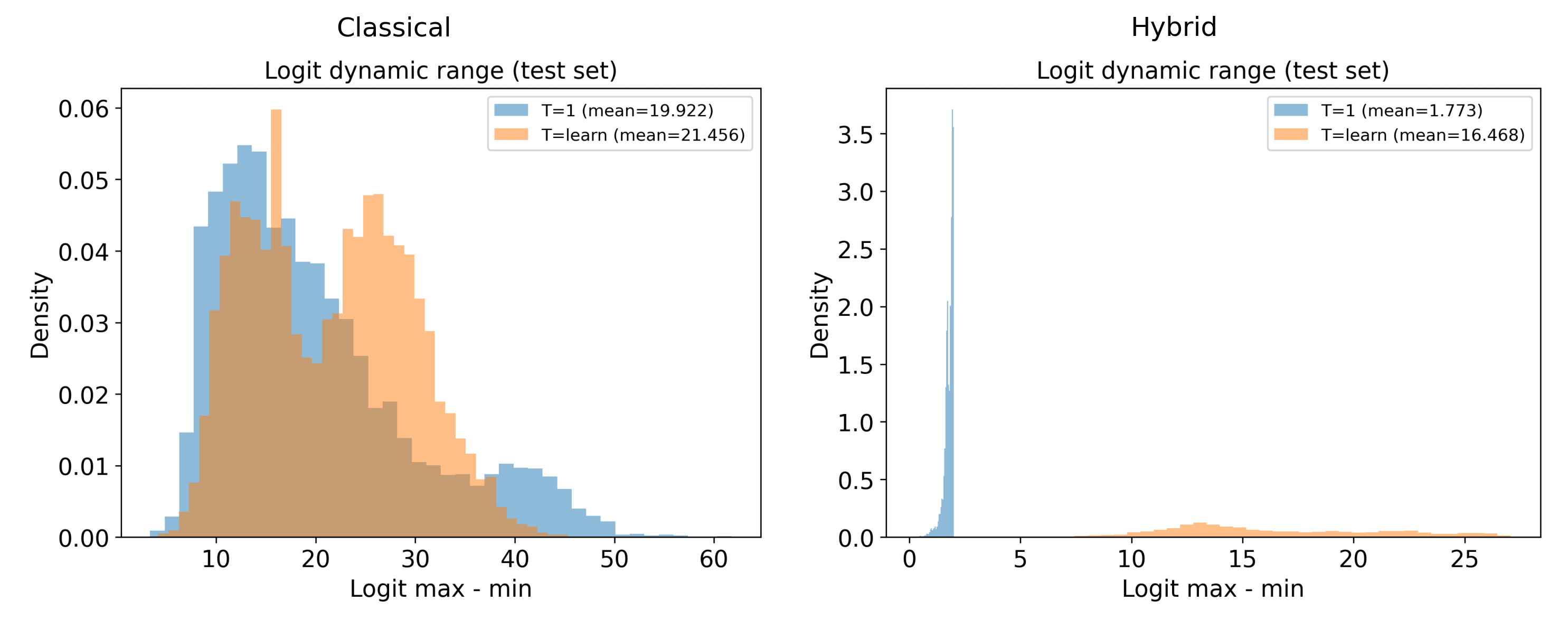}
        \caption{Fashion MNIST}
        \label{fig:logits_range_fmnist}
    \end{subfigure}
    \caption{Distribution of logit ranges on the test set for classical and hybrid models.}
    \label{fig:S1_logit_range}
\end{figure}
The distribution of logit ranges $(\max_k z_k - \min_k z_k)$ computed over all test examples is shown for classical and hybrid models on the protein dataset (Fig. \ref{fig:logits_range_ggon}) and Fashion MNIST(Fig. \ref{fig:logits_range_fmnist}).

Classical models exhibit logit ranges well beyond $2$, while hybrid QNNs remain confined under fixed $T=1$ and expand their effective logit range only when QMT is learned as shown in Fig.\ref{fig:S1_logit_range}.

\subsection{Hybrid QNN with CNOT Entanglement}

\begin{figure}[H]
\centering
\includegraphics[width=\textwidth]{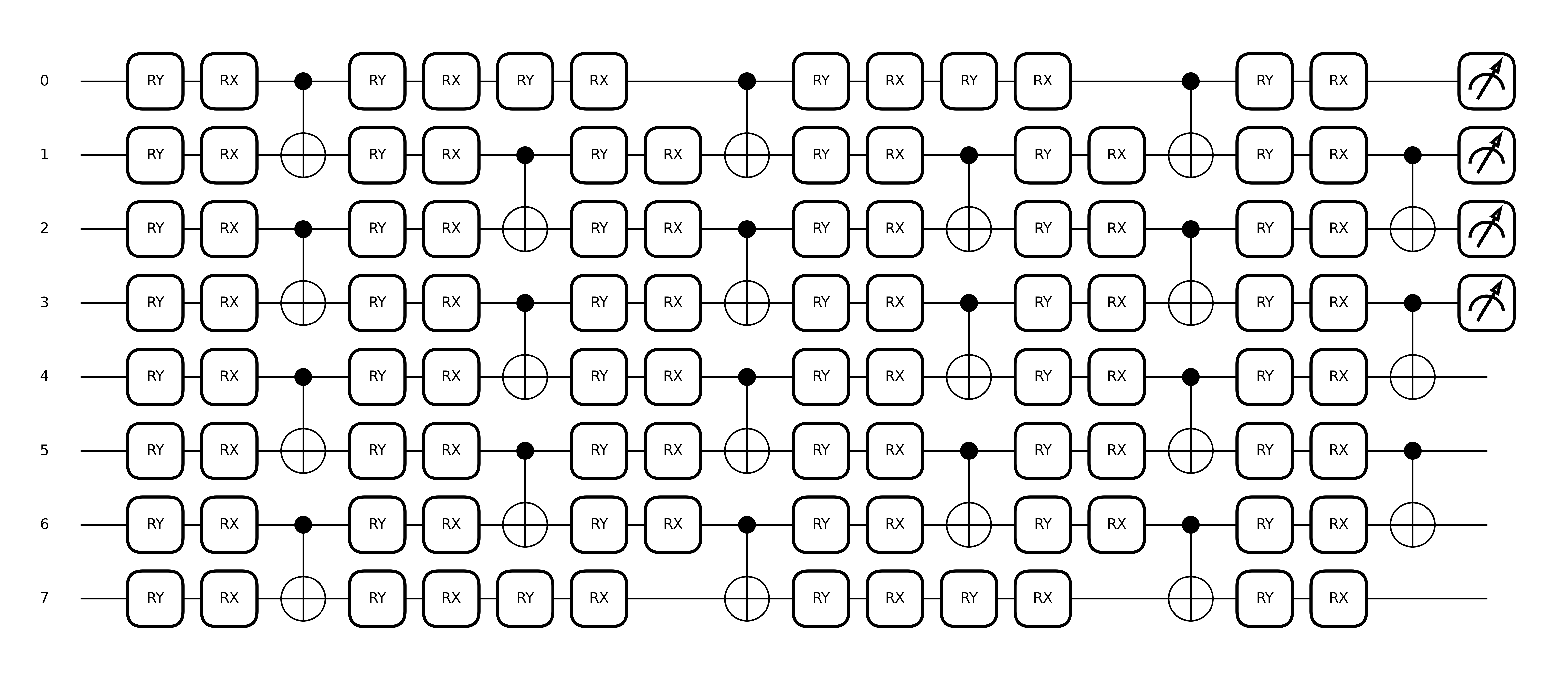}
\caption{Hybrid QNN architecture with CNOT entanglement used for robustness analysis.Hybrid QNN architecture with CNOT entanglement used for robustness analysis. Hybrid QNN with 8 qubits, 6 layers, 4 measurements for protein classification.}
\label{fig:S2_cnot_arch}
\end{figure}

\newpage

\subsection{Effect of QMT with varying batch size and learning rate}
Fig. \ref{fig:ggon_batch_lr} (left) shows that learnable temperature consistently outperforms fixed temperature across all batch sizes, improving test accuracy by 8–15\% and reducing performance variance. In  case of learning rate variation experiment(Fig. \ref{fig:ggon_batch_lr}) , the learnable temperature achieves significantly higher test accuracy (90–95\%) compared to fixed temperature (80–86\%) across all learning rates (0.001–0.1), with optimal performance at lr=0.01.

\begin{figure} [H]
    \centering
    \includegraphics[width=\columnwidth]{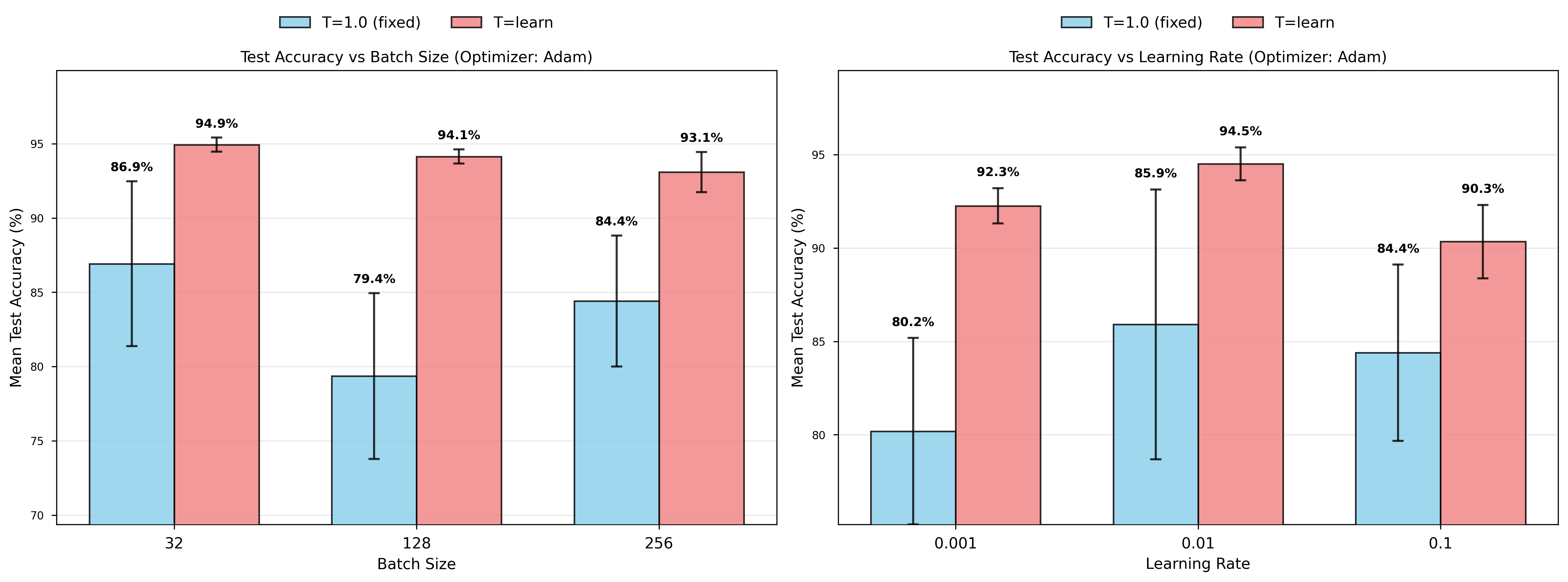}
    \caption{QNN performance on the protein dataset with fixed $T=1$ and learned $T$, while varying batch size (left) and learning rate (right).
    }
    \label{fig:ggon_batch_lr}
\end{figure}


\end{document}